\documentclass[twoside,11pt]{article}

\usepackage[utf8]{inputenc}
\usepackage[T1]{fontenc}
\usepackage{booktabs}
\usepackage{mathtools}
\usepackage{amsmath}
\usepackage{lastpage}
\usepackage{multirow}
\usepackage{jmlr2e}

\newcommand{\indep}{\perp \!\!\! \perp}

\newtheorem{assumption}{Assumption}

\jmlrheading{XX}{2026}{1-\pageref{LastPage}}{[Preprint]}{[Unpublished]}{}{Álvaro Troyano et al}

\ShortHeadings{Tensor Network Moral Graph Recovery of Discrete Probability Distributions}{A. Troyano et al.}
\firstpageno{1}

\begin{document}
\title{Tensor Network Moral Graph Recovery of Discrete Probability Distributions}

\author{\name Á. Troyano Olivas \\
        \addr Heisenberg Research Center (Munich), Huawei Technologies Duesseldorf GmbH, Germany \\
        Center for Computational Simulation, Universidad Politécnica de Madrid, Madrid, Spain \\
        \email alvaro.troyano@huawei.com
        \AND
        \name Chi-Hang Fred Fung \thanks{Corresponding author.} \\
        \addr Heisenberg Research Center (Munich), Huawei Technologies Duesseldorf GmbH, Germany \\
        \email fred.fung@huawei.com
        \AND
        \name Hans H. Brunner \\
        \addr Heisenberg Research Center (Munich), Huawei Technologies Duesseldorf GmbH, Germany \\
        \email hans.brunner@huawei.com
        \AND
        \name Momtchil Peev \\
        \addr Heisenberg Research Center (Munich), Huawei Technologies Duesseldorf GmbH, Germany \\
        \email momtchil.peev@huawei.com
        \AND
        \name Vicente Martin \\
        \addr Center for Computational Simulation, Universidad Politécnica de Madrid, Madrid, Spain \\
        \email vicente.martin@upm.es
        }
        
\editor{[--]}

\maketitle
\begin{abstract}
We present a method for recovering the moral graph of a causal DAG from a probability distribution over discrete variables, using fully connected tensor networks (FCTNs) with nuclear-norm-regularized bond corrections. Each bond matrix is parameterized as a baseline all-ones matrix plus a low-rank correction $C_{ij} = U_{ij}V_{ij}^\top$, and the nuclear norm of the correction implemented via the variational Frobenius norm penalty on the factors drives unnecessary bonds to zero. We prove that under faithfulness, positivity, and a no-implicit-rerouting assumption on the local tensor architecture, \textbf{every} optimal FCTN with zero reconstruction error $\varepsilon = 0$ has effective graph exactly equal to the moral graph. For the approximate regime ($\varepsilon > 0$), we provide explicit recovery bounds using the Fannes--Audenaert continuity of conditional mutual information, and derive a sufficient condition on the regularization parameter $\beta$. The effective graph is read directly from the optimized bond matrices.
\end{abstract}

\section{Introduction}
\label{sec:introduction}

Recovering the causal structure of a system of discrete random variables from observational data is a central problem in causal discovery. The moral graph, an undirected graph that connects two variables whenever they share a direct causal link or are co-parents in a v-structure, is a fundamental intermediate representation: it is an invariant of the Markov equivalence class (see Appendix~\ref{sec:causal_background} for an introduction to the moral graph) and can be determined from purely observational data, without interventions.

Existing approaches to structure learning fall broadly into two categories. \emph{Constraint-based} methods such as the PC algorithm
\cite{spirtes_2000, meek_rules} iteratively remove edges from a complete graph using conditional independence (CI) tests; their accuracy depends critically on the reliability of these tests, which degrade in the presence of finite samples or weak dependencies. \emph{Score-based} methods such as GES and NOTEARS \cite{zheng_2018} reformulate structure learning as continuous optimization over a score function, but search over the space of DAGs, which is combinatorially large, and typically require acyclicity constraints or surrogate penalties.

In this work we propose a fundamentally different approach: we decompose the observed probability distribution $p(V_1, \ldots, V_m)$ into a \emph{fully connected tensor network} (FCTN) whose bond matrices are parameterized as a baseline all-ones matrix plus a low-rank correction $C_{ij} = U_{ij}V_{ij}^\top$. A nuclear-norm penalty on the corrections (implemented via the variational Frobenius norm on the factors) drives unnecessary bonds to zero, so that the \emph{effective graph} (the set of bonds with nonzero correction) emerges directly from optimization.

The key ideas are:
\begin{itemize}
    \item The nuclear-norm penalty on bond corrections replaces discrete CI tests with a continuous, differentiable measure of bond strength. Rerouting a correlation through an intermediate node is always more expensive than representing it directly, so the optimizer prefers direct bonds for genuine conditional dependencies (Lemma~\ref{lem:rerouting_cost}).
    
    \item Non-moral edges carry zero conditional mutual information. Under the no-implicit-rerouting assumption, any correction on such an edge is either null, implicitly rerouted (ruled out by the architectural constraint), or explicitly rerouted (which is strictly suboptimal by the rerouting cost). Consequently, non-moral edges are driven to zero correction (Theorem~\ref{th:upper_bound}). 
    
    \item For approximate reconstruction ($\varepsilon > 0$), the Fannes--Audenaert continuity bound on conditional mutual information \cite{audenaert_2007} provides explicit recovery guarantees and a sufficient condition on the regularization parameter $\beta$     (Theorem~\ref{th:beta}).
\end{itemize}

Our main result (Theorem~\ref{th:exact_recovery}) states that under faithfulness, positivity, and a no-implicit-rerouting assumption on the local tensor architecture, \textbf{every} optimal FCTN with zero reconstruction error has effective graph exactly equal to the moral graph. This is a uniqueness statement: the result holds for all optimal solutions, not merely for some existence result. The approximate regime is handled by Theorem~\ref{th:approx_recovery}, which bounds the conditional mutual information of spurious edges and shows convergence to the moral graph as $\varepsilon \to 0$.

The remainder of the paper is organized as follows. Section~\ref{sec:problem} states the problem formally. Section~\ref{sec:definitions} introduces the necessary definitions (distributions, information quantities, bond matrices, effective graph, d-separation). Section~\ref{sec:objective} defines the objective function and the variational nuclear-norm penalty. Section~\ref{sec:theory} establishes the information-theoretic and graphical lemmas underlying the recovery proofs. Section~\ref{sec:tn_properties} develops the tensor-network properties (rewiring, rerouting cost, implicit rerouting). Sections~\ref{sec:exact_recovery} and \ref{sec:approx_recovery} state and prove the exact and approximate recovery theorems, respectively. Section~\ref{sec:experiments_synthetic} reports synthetic experiments. Section~\ref{sec:discussion} discusses limitations and connections to existing work, and Section~\ref{sec:conclusions} concludes. Appendix~\ref{sec:causal_background} reviews the necessary background on structural causal models and appendix~\ref{sec:choosing_beta} gives a principled way to choose the regularization parameter $\beta$.

\section{Problem Statement}
\label{sec:problem}

Let $V = \{V_1, \ldots, V_m\}$ be a set of discrete random variables with joint distribution $p(V_1, \ldots, V_m)$. We assume that $p$ is faithful to an unknown causal DAG $G$ (Assumption~\ref{assum:faithfulness}) with moral graph $G^m = (V, E^m)$. The goal is to recover the edge set $E^m$ from $p$ alone.

Our method decomposes $p$ into a fully connected tensor network (FCTN) whose bond matrices are parameterized as $B_{ij} = J_{ij} + U_{ij}V_{ij}^\top$. A nuclear-norm penalty on the corrections $C_{ij} = U_{ij}V_{ij}^\top$ drives unnecessary bonds to zero, so that the effective graph
\[
    G_{\mathrm{eff}}(TN) = \bigl(V,\;\bigl\{\{V_i,V_j\} : \|C_{ij}\|_* > 0\bigr\}\bigr)
\]
is read directly from the optimized solution. The formal ansatz, objective function, and recovery guarantees are developed in Sections~\ref{sec:definitions}--\ref{sec:approx_recovery}. 

\section{Definitions}
\label{sec:definitions}

\subsection{Distributions, Marginals, and Entropy}
\label{sec:definitions_entropy}
Let $V = \{V_1, \dots, V_m\}$ be a set of $m$ discrete random variables with finite state spaces $\mathcal{X}_{V_i} = \{1, \dots, d_i\}$. For a non-empty subset $S \subseteq V$ we define the \emph{configuration space} of the variables in $S$ by
\begin{equation}
    \mathcal{X}_S := \bigtimes_{i \,:\, V_i \in S} \mathcal{X}_{V_i},
\end{equation}
and we write $p_S$ for the marginal distribution of $p$ \emph{on} $S$:
\begin{equation}
\label{eq:marginal_projection}
p_S(s) \;=\; \sum_{t \in \mathcal{X}_{V \setminus S}} p(s, t),
\qquad s \in \mathcal{X}_S,
\end{equation}
with the convention $p_V \equiv p$ when $S = V$. Thus $p_S$ is the projection of $p$ onto the coordinates of $S$, obtained by summing over all configurations of the complementary variables.

The \emph{Shannon entropy} of a distribution $q$ on a finite set $\mathcal{X}$ is
\[
H(q) \;:=\; -\sum_{x \in \mathcal{X}} q(x)\log_2 q(x),
\]
with the usual convention $0 \log_2 0 := 0$.

To keep the notation compact, we write $H(S) := H(p_S)$ for the entropy of a marginal and use the shorthand $H(A,B) := H(A \cup B)$ for entropies of joints. \emph{All logarithms in this work are base $2$}, i.e., all entropies and information quantities are expressed in bits.

\subsection{Mutual Information and Conditional Mutual Information}
\label{sec:MI_CMI}
For two disjoint non-empty subsets $A, B \subseteq V$, the (\emph{unconditional}) \emph{mutual information} between $A$ and $B$ is defined as
\begin{equation}
\label{eq:MI}
    I(A : B) \;:=\; H(A) + H(B) - H(A,B).
\end{equation}
$I(A,B)$ is non-negative and symmetric in $A$ and $B$, and vanishes if and only if $A$ and $B$ are independent in $p$.

For three pairwise-disjoint subsets $A, B, C \subseteq V$, the \emph{conditional mutual information} of $A$ and $B$ given $C$ is
\begin{equation}
\label{eq:CMI}
    I(A : B \mid C) \;:=\; H(A,C) + H(B,C) - H(C) - H(A,B,C).
\end{equation}
Equivalently, it is the expected mutual information of $A$ and $B$ inside the conditional distribution $p_{A,B \mid C}$,
\[
I(A : B \mid C) \;=\; \mathbb{E}_{\,p_C}\Bigl[\, I\bigl( p_{A \mid C} , \, p_{B \mid C} \bigr)\Bigr],
\]
from which the established \emph{if and only if} characterization follows directly:
\[
I(A : B \mid C) = 0 \quad\iff\quad A \indep B \mid C.
\]
Throughout this work, the central special case is the dependence between a pair of variables when conditioned on \emph{all remaining variables}:
\begin{equation}
\label{eq:pair_cmi}
I(V_i : V_j \mid V_{\mathrm{rest}}) , \qquad V_{\mathrm{rest}} := V \setminus \{V_i, V_j\}.
\end{equation}
Whenever the conditioning set is $V_{\mathrm{rest}}$, the subscript $\mathrm{rest}$ is omitted if no ambiguity can arise.

\subsection{Graphical Structure and d-Separation}
\label{sec:dsep}

Given a DAG $G = (V, E)$, a \emph{trail} between two nodes is a sequence of adjacent edges connecting them, traversed in any direction. A node $W$ on a trail is a \emph{collider} (or \emph{v-structure}) if both edges incident to $W$ on the trail point toward $W$ (i.e.\ $\cdots \to W \leftarrow \cdots$); otherwise $W$ is a \emph{non-collider} on that trail.

\begin{definition}[d-Separation]
\label{def:dsep}
Let $G = (V, E)$ be a DAG and let $X, Y, Z \subseteq V$ be pairwise disjoint. A trail between a node in $X$ and a node in $Y$ is \emph{blocked} by $Z$ if it contains either:
\begin{enumerate}
    \item a non-collider $W$ with $W \in Z$, or
    \item a collider $W$ such that neither $W$ nor any descendant of $W$
          in $G$ belongs to $Z$.
\end{enumerate}
$X$ and $Y$ are \emph{d-separated} given $Z$, written $\mathrm{dsep}_G(X;\, Y \mid Z)$, if every trail between $X$ and $Y$ is blocked by $Z$. If not every trail is blocked, $X$ and $Y$ are \emph{d-connected} given $Z$, denoted $\neg\,\mathrm{dsep}_G(X;\, Y \mid Z)$.
\end{definition}

The d-separation criterion encodes the conditional independence structure implied by a DAG: for any distribution $p$ Markovian to $G$,
$\mathrm{dsep}_G(X;\, Y \mid Z) \implies X \indep Y \mid Z$. Under faithfulness (Definition in Section~\ref{sec:theory}), the converse also holds, establishing the bidirectional correspondence $\mathrm{dsep}_G(X;\, Y \mid Z) \iff X \indep Y \mid Z$ that underpins the lemmas of Section~\ref{sec:theory}.

\subsection{Bond Matrices and the Tensor Network Ansatz}
\label{sec:bond_matrices}
A \textit{fully connected tensor network} (FCTN) over $m$ sites represents a joint distribution $p(s_1, \ldots, s_m)$ as a contraction of $m$ local tensors $N^{[i]}$ connected by bond matrices $B_{ij}$ on every pair $\{i,j\}$:
\begin{equation}
    \label{eq:ansatz}
    TN(s_1, \ldots, s_m) = \sum_{\{\alpha_{ij}\}} \prod_{i=1}^m N^{[i]}_{s_i, \{\alpha_{ij}: j \neq i\}} \prod_{\{i,j\}} B_{ij}(\alpha_{ij}, \alpha_{ji}),
\end{equation}
where $s_i \in \{1, \ldots, d_i\}$ is the physical index at site $i$, $\alpha_{ij} \in \{1, \ldots, r_{\max}\}$ is the bond index connecting sites $i$ and $j$, and $r_{\max}$ is the maximum bond dimension (fixed and uniform across all bonds).

\subsubsection{Baseline-plus-correction parameterization}

Each bond matrix is decomposed as:
\begin{equation}
\label{eq:bond_param}
B_{ij} = J_{ij} + C_{ij}, \qquad C_{ij} = U_{ij} V_{ij}^\top, \qquad U_{ij} \in \mathbb{R}^{r_{\max} \times K}, \quad V_{ij} \in \mathbb{R}^{r_{\max} \times K},
\end{equation}
where $J_{ij}$ is the $r_{\max} \times r_{\max}$ all-ones matrix (a fixed constant, not a trainable parameter) and $C_{ij} = U_{ij}V_{ij}^\top$ is the \textbf{correction} that deviates the bond from the disconnected baseline. When $C_{ij} = 0$, the bond reduces to $B_{ij} = J_{ij}$, and the two sites sum their bond indices independently:
\begin{equation}
    \sum_{\alpha_{ij}, \alpha'_{ji}} N^{[i]}(s_i, \alpha_{ij})\, J(\alpha_{ij}, \alpha'_{ji})\, N^{[j]}(s_j, \alpha'_{ji}) = \Bigl(\sum_{\alpha_{ij}} N^{[i]}(s_i, \alpha_{ij})\Bigr)\Bigl(\sum_{\alpha'_{ji}} N^{[j]}(s_j, \alpha'_{ji})\Bigr),
\end{equation}
which is a product of marginals. The bond is \emph{absent}: no information is shared between the two sites. This is the correct notion of ``disconnected'' for a multiplicative contraction unlike $B_{ij} = 0$, which would zero out the entire TN output.
The $J$ baseline ensures the TN output is nonzero even when all corrections are zero (it produces the fully factorized distribution $\hat{p} = \prod_i f_i(s_i)$), so the optimizer cannot trivially zero out the output by collapsing all bonds.

\subsubsection{Role of each object} 

The local tensors $N^{[i]}$ carry the physical indices $s_i$ and connect the tensor network to the variables being modeled. The bond factors $U_{ij}, V_{ij}$ live entirely on the bonds and have no physical indices; they control \emph{whether and how strongly} two sites deviate from independence. The bond matrices can always be absorbed into the local tensors (by setting $\tilde{N}^{[i]} = N^{[i]} \cdot B_{ij}$), so the ansatz is equally expressive as a standard TN. The purpose of keeping them separate is to expose the bond strength for penalization (see Section~\ref{sec:objective}).

The \textit{effective rank} of bond $\{V_i, V_j\}$ is $\mathrm{rank}(C_{ij})$, which the optimizer determines by driving unnecessary singular values of $C_{ij}$ to zero. The \textit{nuclear norm} $\|C_{ij}\|_* = \sum_k \sigma_k(C_{ij})$ measures the total information capacity of the bond (the magnitude of the deviation from the disconnected state).
\subsection{Effective Graph}
\label{sec:effective_graph}
\begin{definition}[Effective Graph]
    Let $TN$ be a tensor network with bond matrices $\{B_{ij} = J_{ij} + C_{ij}\}$. The \textbf{effective graph} of $TN$ is the undirected graph consisting of all bonds with nonzero correction nuclear norm:
    \begin{equation}
        G_{\text{eff}}(TN) = (V, E_{\text{eff}}), \qquad E_{\text{eff}} = \bigl\{\{V_i, V_j\} : \|C_{ij}\|_* > 0\bigr\}.
    \end{equation}
    Bonds with $\|C_{ij}\|_* = 0$ (i.e., $B_{ij} = J_{ij}$, the disconnected baseline) carry no shared information and do not appear in the effective graph.
\end{definition}
The effective graph is read directly from the optimized bond matrices: it is the set of bonds that survived the optimization with non-trivial correction nuclear norm. The nuclear norm penalty in the objective function (Section~\ref{sec:objective}) drives unnecessary bonds to $C_{ij} = 0$ (i.e., $B_{ij} = J_{ij}$), so the effective graph is the direct output of the optimizer.
Compared to a bond-dimension threshold ($r_{ij} > 1$), the nuclear norm of the correction provides a \emph{continuous} measure of bond strength: a bond with $r = 5$ but tiny correction singular values (total nuclear norm $\approx 0$) is correctly pruned, whereas a dimension-based criterion would keep it. In the approximate regime ($\varepsilon > 0$), the threshold becomes $\|C_{ij}\|_* > \tau$ with $\tau = f(\varepsilon^*)$ (Section~\ref{sec:approx_recovery}).

\subsubsection{Practical implementation: rank-based pruning}

In practice, the effective graph is computed using the \emph{effective rank} of the correction (the number of singular values of $C_{ij} = U_{ij}V_{ij}^\top$ exceeding a numerical tolerance $\text{tol} = 10^{-6}$):
\[
r_{\text{eff}}(C_{ij}) = \bigl|\{k : \sigma_k(C_{ij}) > \text{tol}\}\bigr|.
\]
A bond is included in $G_{\text{eff}}$ iff $r_{\text{eff}}(C_{ij}) \geq 1$. This rank-based criterion is preferred over a nuclear norm threshold for two reasons: (1)~the tolerance $10^{-6}$ is stable across problems because it exploits the natural gap between floating-point/optimizer noise (singular values $\sim 10^{-8}$) and genuine corrections (singular values $\sim 10^{-3}$ or larger); (2)~the nuclear norm threshold would need problem-dependent tuning, since the nuclear norm of a moral edge scales with the strength of the dependencies, the value of $\beta$, and the number of inner steps. The nuclear norm is instead used as a \emph{diagnostic} (reporting bond strength) rather than a decision criterion.
\section{Objective Function}
\label{sec:objective}
\begin{definition}[Objective Function]
    \label{def:objective}
    Let $p$ be a probability distribution over $m$ discrete variables. Define the \textbf{objective function}:
    \begin{equation}
        J(\theta) = \bigl\|p - TN(\theta)\bigr\|_2^2 + \frac{\beta}{2} \sum_{\{V_i, V_j\}} \Bigl(\|U_{ij}\|_F^2 + \|V_{ij}\|_F^2\Bigr),
    \end{equation}
    where $\beta > 0$ is a regularization parameter, $TN(\theta)$ is computed using $B_{ij} = J_{ij} + U_{ij}V_{ij}^\top$ on each bond, and $\theta = \{N^{[i]}, U_{ij}, V_{ij}\}$ collects all trainable TN parameters. The $J_{ij}$ baselines are fixed constants. The optimal inner value is $J^* = \min_\theta J(\theta)$.
\end{definition}

Throughout, we write
\begin{equation}
    \varepsilon(\theta) := \bigl\|p - TN(\theta)\bigr\|_2^2,
    \qquad
    P(\theta) := \frac{1}{2} \sum_{\{V_i, V_j\}} \Bigl(\|U_{ij}\|_F^2 + \|V_{ij}\|_F^2\Bigr),
    \label{eq:eps_penalty_split}
\end{equation}
so that $J(\theta) = \varepsilon(\theta) + \beta\, P(\theta)$; at a balanced
factorization (Eq.~\eqref{eq:variational}), $P(\theta) = \sum_{\{V_i,V_j\}} \|C_{ij}\|_*$.

The penalty on $U_{ij}, V_{ij}$ implements the nuclear norm of the correction $C_{ij} = U_{ij}V_{ij}^\top$ via the variational identity:
\begin{equation}
    \label{eq:variational}
    \|C_{ij}\|_* = \min_{C_{ij} = UV^\top} \frac{1}{2}\bigl(\|U\|_F^2 + \|V\|_F^2\bigr).
\end{equation}
This identity states that the nuclear norm of the correction (sum of its singular values) equals the minimum Frobenius cost over all factorizations. The minimum is achieved at the balanced factorization $U = W\Sigma^{1/2}$, $V = Z\Sigma^{1/2}$ (where $C_{ij} = W\Sigma Z^\top$ is the SVD), but the optimizer \emph{discovers} this automatically: any unbalanced factorization has strictly higher Frobenius cost for the same $C_{ij}$, so gradient descent converges to the balanced one. Crucially, no SVD is ever computed during optimization, the $U_{ij}, V_{ij}$ are free parameters initialized randomly and updated by standard backpropagation.
The penalty $\sum \|C_{ij}\|_*$ is a genuine norm on the correction, making the objective structurally identical to LASSO ($\|Ax - b\|^2 + \lambda\|x\|_1$) and matrix completion ($\|P_\Omega(M-X)\|^2 + \lambda\|X\|_*$). The nuclear norm is the tightest convex relaxation of rank, just as $\ell_1$ is the convex relaxation of $\ell_0$. The $J$ baseline plays no role in the penalty: only the deviation $C_{ij}$ is penalized, so the optimizer is incentivized to minimize the correction magnitude while maintaining reconstruction fidelity.
The optimization is performed over the class of \textit{fully connected tensor networks} (FCTNs): the ansatz has all $\binom{m}{2}$ edges present with bond matrices $B_{ij} = J_{ij} + C_{ij}$, and the optimizer drives unnecessary corrections to $C_{ij} = 0$ (i.e., $B_{ij} = J_{ij}$) via the nuclear norm penalty. The effective graph $G_{\text{eff}}$ of the optimal FCTN is the output of the method.

\section{Theoretical Framework}
\label{sec:theory}

To establish the correspondence between moral graph structure and conditional mutual information, we require two standard assumptions.

\begin{definition}[Faithfulness]
    A distribution $P$ is \textbf{faithful} to a causal DAG $G$ if for all sets of variables $X, Y, Z \subseteq V$:
    \[
        (X \indep Y \mid Z) \text{ in } P
        \iff
        \mathrm{dsep}_G(X; Y \mid Z).
    \]
\end{definition}

\begin{assumption}[Faithful Distribution]
    \label{assum:faithfulness}
    We assume that the probability distribution  $p(V_1, \ldots, V_m)$ is faithful to the underlying causal DAG $G$.
\end{assumption}

\begin{assumption}[Positive Distribution]
    \label{assum:positivity}
    We assume $p(s) > 0$ for all configurations $s$. This ensures the applicability of the Hammersley--Clifford theorem \cite{Clifford90} and guarantees that the conditional independence structure is fully captured by the graph.
\end{assumption}

These assumptions are reasonable for several reasons:
\begin{itemize}
    \item \textbf{Generic position}: Almost all parameterizations of a causal DAG are faithful. The set of unfaithful distributions has measure zero in the parameter space \cite{koller_friedman_2009}. Unfaithfulness requires either deterministic relationships, carefully balanced parameter cancellations, or zero probabilities.
    \item \textbf{Computational necessity}: Faithfulness provides the clean correspondence $\emph{d-connected} \iff I > 0$ that our optimization-based structure learning algorithm relies on. Positivity ensures the Hammersley--Clifford factorization over the moral graph.
\end{itemize}

\subsection{Basic Lemmas}

Here we prove some basic lemmas that will be useful later, when proving the main result.

\begin{lemma}[Moral Edges Have Positive Conditional MI]
    \label{lem:moral_mi}
    Let $G$ be a causal DAG with moral graph $G^m = (V, E^m)$. Under Assumption~\ref{assum:faithfulness}, for every edge $\{V_i, V_j\} \in E^m$:
    \[
    I(V_i : V_j \mid V \setminus \{V_i, V_j\}) > 0.
    \]
\end{lemma}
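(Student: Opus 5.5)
The approach is to reduce the claim to a d-connection statement via faithfulness. By the characterization in Section~\ref{sec:MI_CMI}, $I(V_i : V_j \mid V_{\mathrm{rest}}) > 0$ holds if and only if $V_i \not\indep V_j \mid V_{\mathrm{rest}}$. By Assumption~\ref{assum:faithfulness}, applied with $X=\{V_i\}$, $Y=\{V_j\}$ and $Z = V_{\mathrm{rest}}$, this is equivalent to $\neg\,\mathrm{dsep}_G(V_i; V_j \mid V_{\mathrm{rest}})$. The whole task is therefore to exhibit, for each moral edge, a single trail between $V_i$ and $V_j$ that is not blocked by $V_{\mathrm{rest}}$ in the sense of Definition~\ref{def:dsep}. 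Positivity is not needed for this direction.

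I would split into the two cases that define a moral edge.

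\emph{Case 1: $V_i$ and $V_j$ are adjacent in $G$,} say $V_i \to V_j$. Take the trail consisting of this single edge. It has no intermediate nodes, so neither blocking condition can apply, and it is unblocked for any conditioning set, in particular $V_{\mathrm{rest}}$.

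\emph{Case 2: $V_i$ and $V_j$ are non-adjacent co-parents of a common child $W$,} so $V_i \to W \leftarrow V_j$. Take the trail $V_i \to W \leftarrow V_j$. Its only intermediate node is $W$, which is a collider on this trail. Since $W \neq V_i, V_j$, we have $W \in V_{\mathrm{rest}}$, so condition~2 of Definition~\ref{def:dsep} fails. Condition~1 is vacuous because there is no non-collider on the trail. Hence the trail is unblocked.

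In both cases $V_i$ and $V_j$ are d-connected given $V_{\mathrm{rest}}$. Faithfulness then gives conditional dependence, and therefore strictly positive conditional mutual information.

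There is no real obstacle here. The only point needing care is Case~2: one must check that the collider $W$ is indeed in the conditioning set, which holds because we condition on \emph{all} remaining variables, and that no other intermediate node could block the trail, which holds because the trail has length two. I would also note explicitly that only the direction ``d-connected $\Rightarrow$ dependent'' of faithfulness is used.
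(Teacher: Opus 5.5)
Your proposal is correct and follows the paper's proof essentially step for step. It uses the same two-case split (a direct edge versus co-parents of a common child), exhibits the same unblocked trails (the single edge, and the collider trail whose collider lies in $V \setminus \{V_i, V_j\}$), and appeals to the same ``d-connected $\Rightarrow$ dependent'' direction of faithfulness to conclude $I(V_i : V_j \mid V \setminus \{V_i, V_j\}) > 0$.
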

\begin{proof}
    By definition of the moral graph, $\{V_i, V_j\} \in E^m$ if and only if one of the following holds:
    \textbf{Case 1: Direct causal link.} $V_i \rightarrow V_j$ or $V_j \rightarrow V_i$ in $G$.
    When there is a direct edge between $V_i$ and $V_j$, consider the d-separation relationship with conditioning set $Z = V \setminus \{V_i, V_j\}$. The trail $V_i - V_j$ is a direct connection with no intermediate nodes. Consequently, this trail is \textbf{not blocked} by $Z$, meaning $V_i$ and $V_j$ are d-connected given $Z$:
    $$ \neg\, d\text{-sep}_G(V_i; V_j \mid Z) $$
    By the faithfulness assumption (Assumption~\ref{assum:faithfulness}):
    $$ \neg\, d\text{-sep}_G(V_i; V_j \mid Z) \implies \neg\, (V_i \indep V_j \mid Z) $$
    Which is equivalent to:
    $$ I(V_i : V_j \mid V \setminus \{V_i, V_j\}) > 0 $$
    \textbf{Case 2: V-structure.} $V_i \rightarrow V_k \leftarrow V_j$ in $G$, where $V_k$ is a common child and $V_k \notin \{V_i, V_j\}$.
    Consider the trail $V_i - V_k - V_j$. This is a v-structure with collider $V_k$. When conditioning on $Z = V \setminus \{V_i, V_j\}$, the node $V_k$ is in the conditioning set (since $V_k \neq V_i$ and $V_k \neq V_j$).
    A collider trail is \textbf{active} precisely when the collider (or any of its descendants) is in the conditioning set. Therefore, $V_i$ and $V_j$ are d-connected given $Z$:
    $$ \neg\, d\text{-sep}_G(V_i; V_j \mid Z) $$
    By faithfulness:
    $$ \neg\, (V_i \indep V_j \mid Z) \implies I(V_i : V_j \mid V \setminus \{V_i, V_j\}) > 0 $$
    In both cases, the existence of a moral edge implies positive conditional mutual information.
\end{proof}
\begin{lemma}[Non-Moral Edges Have Zero Conditional MI]
    \label{lem:non_moral_mi}
    For every pair $\{V_i, V_j\} \notin E^m$:
    \[
    I(V_i : V_j \mid V \setminus \{V_i, V_j\}) = 0.
    \]
\end{lemma}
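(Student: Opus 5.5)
The plan is to show that for a non-moral pair the conditioning set $Z = V \setminus \{V_i, V_j\}$ d-separates $V_i$ and $V_j$ in $G$. The global Markov property then gives $V_i \indep V_j \mid Z$, which is equivalent to $I(V_i : V_j \mid Z) = 0$ by the characterization in Section~\ref{sec:MI_CMI}. Only the forward (Markov) direction of the d-separation/independence correspondence is needed. This direction holds for any $p$ Markovian to $G$, and in particular under Assumption~\ref{assum:faithfulness}.

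To prove the d-separation, I take an arbitrary trail $\pi$ between $V_i$ and $V_j$. Since $\{V_i,V_j\}\notin E^m$, there is no direct edge between them, so $\pi$ has at least one intermediate node, and every intermediate node lies in $Z$. I split into two cases.

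\textbf{Case 1: $\pi$ has a non-collider.} Some intermediate node of $\pi$ is a non-collider. That node lies in $Z$, so $\pi$ is blocked by clause 1 of Definition~\ref{def:dsep}.

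\textbf{Case 2: every intermediate node of $\pi$ is a collider.} The edges of a trail alternate orientation at colliders, so this forces $\pi$ to be exactly $V_i \to W \leftarrow V_j$ with a single intermediate $W$. Suppose instead there were two consecutive intermediate nodes $W_1, W_2$. Both being colliders would require the edge between them to point into both endpoints, which is impossible. Hence the trail has the form above, so $V_i$ and $V_j$ are co-parents of $W$ and would be married in $G^m$, contradicting $\{V_i,V_j\}\notin E^m$. So this case cannot occur.

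Combining the cases, every trail is blocked and $\mathrm{dsep}_G(V_i;V_j\mid Z)$ holds, which completes the argument. The only delicate point is the combinatorial observation in Case 2, that an all-collider trail has length two. Everything else is immediate from the definitions. Alternatively, one could invoke positivity (Assumption~\ref{assum:positivity}) and the Hammersley--Clifford factorization of $p$ over $G^m$, using the pairwise Markov property of that factorization. I would present the d-separation argument as primary because it is self-contained.
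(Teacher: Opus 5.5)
Your proposal is correct and follows the same route as the paper. Conditioning on $Z = V\setminus\{V_i,V_j\}$ puts every intermediate node in $Z$, so any non-collider blocks; an all-collider trail cannot have two consecutive intermediate nodes; and the Markov direction of the d-separation correspondence then gives $I = 0$. Your Case~2 is slightly more careful than the paper's write-up: you explicitly use the absence of a common child to exclude the single-collider trail $V_i \to W \leftarrow V_j$, whereas the paper lists that condition but only rules out all-collider trails with at least two intermediate nodes.
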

\begin{proof}
    If $\{V_i, V_j\} \notin E^m$, then by definition of the moral graph:
    \begin{enumerate}
        \item There is no direct edge $V_i \rightarrow V_j$ or $V_j \rightarrow V_i$ in $G$.
        \item There is no node $V_k$ such that $V_i \rightarrow V_k \leftarrow V_j$ (i.e., $V_i$ and $V_j$ are not co-parents in any v-structure).
    \end{enumerate}
    Let $Z = V \setminus \{V_i, V_j\}$ be the conditioning set. We show that $V_i$ and $V_j$ are d-separated given $Z$.
    Consider any trail in $G$ from $V_i$ to $V_j$. Since there is no direct edge (by (1)), the trail has length at least 2, with intermediate nodes $W_1, W_2, \dots, W_k$.
    For the trail to be \textbf{active} given $Z$:
    \begin{itemize}
        \item Every non-collider $W$ on the trail must satisfy $W \notin Z$
        \item Every collider $W$ on the trail must satisfy $W \in Z$ (or have a descendant in $Z$)
    \end{itemize}
    Since $Z = V \setminus \{V_i, V_j\}$ contains all nodes except $V_i$ and $V_j$, every intermediate node $W$ is in $Z$. For the trail to be active, every non-collider must \textbf{not} be in $Z$---but every intermediate node \textbf{is} in $Z$. Therefore every intermediate node must be a collider.
    But a trail where every intermediate node is a collider requires, for each consecutive pair of edges, both edges to point toward the intermediate node. For a trail $V_i - W_1 - W_2 - \cdots - W_k - V_j$, this means $W_1$ is a collider ($V_i \rightarrow W_1 \leftarrow W_2$ or similar), $W_2$ is a collider, etc. At the junction between $W_1$ and $W_2$, we need both $W_1$ to be a collider (requiring $W_2 \rightarrow W_1$) and $W_2$ to be a collider (requiring $W_1 \rightarrow W_2$), which is impossible in a DAG (it would create a 2-cycle).
    Hence, all trails from $V_i$ to $V_j$ are blocked by $Z$, and:
    $$
        d\text{-sep}_G(V_i; V_j \mid Z) \implies (V_i \indep V_j \mid Z) \implies I(V_i : V_j \mid V \setminus \{V_i, V_j\}) = 0.
    $$
\end{proof}
\begin{lemma}[Conditional Independence Depends Only on Moral Neighbors]
\label{lem:ci_moral_neighbors}
Let $G$ be a causal DAG with moral graph $G^m$, and let $p$ be faithful to $G$ (Assumption~\ref{assum:faithfulness}). For any variable $V_i$ and conditioning set $V_{\text{rest}} = V \setminus \{V_i\}$, the conditional distribution $p(V_i \mid V_{\text{rest}})$ depends only on the moral neighbors of $V_i$ in $V_{\text{rest}}$:
\[
p(V_i \mid V_{\text{rest}}) = p\bigl(V_i \mid \{V_k \in V_{\text{rest}} : \{V_i, V_k\} \in E^m\}\bigr).
\]
\end{lemma}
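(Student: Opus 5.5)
The plan is to show that the moral neighbours of $V_i$, namely its Markov blanket (parents, children, and co-parents of children), d-separate $V_i$ from every other variable in $G$. The Markov property then gives the required conditional independence, and the factorization of $p(V_i \mid V_{\text{rest}})$ follows. Write $M_i := \{V_k \in V_{\text{rest}} : \{V_i,V_k\}\in E^m\}$ and $R_i := V_{\text{rest}}\setminus M_i$. The target statement is $V_i \indep R_i \mid M_i$. Given this, for any configuration we have $p(V_i \mid M_i, R_i) = p(V_i \mid M_i)$. All conditionals are well defined by Assumption~\ref{assum:positivity}, so the displayed identity holds pointwise.

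The first step is to identify $M_i$ with the Markov blanket $\mathrm{pa}(V_i)\cup \mathrm{ch}(V_i)\cup \mathrm{pa}(\mathrm{ch}(V_i))\setminus\{V_i\}$. This is immediate from the definition of the moral graph used in the proofs of Lemmas~\ref{lem:moral_mi} and~\ref{lem:non_moral_mi}: an edge arises either from a direct link or from a co-parent relation.

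The second step is to prove $\mathrm{dsep}_G(V_i; R_i \mid M_i)$ directly from Definition~\ref{def:dsep}. Take any trail from $V_i$ to some $W\in R_i$ and look at its first edge.
\begin{itemize}
\item If the first edge is $V_i \leftarrow P$, then $P$ is a parent and lies in $M_i$. It is a non-collider on the trail because its edge points out of $P$ toward $V_i$. The trail is therefore blocked.
\item If the first edge is $V_i \to C$, then $C$ is a child and lies in $M_i$. Since $W\notin M_i$, $C\neq W$ and the trail continues past $C$.
\begin{itemize}
\item If $C$ is a non-collider, the trail is blocked at $C$.
\item If $C$ is a collider, the next edge is $C \leftarrow Q$, so $Q$ is a co-parent. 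Then $Q\in M_i$, and also $Q \neq W$ and $Q\neq V_i$ (the latter because the trail cannot revisit $V_i$ if it is a simple trail; alternatively, use the convention that trails are paths). $Q$ is a non-collider on the trail because its edge points out toward $C$. The trail is blocked at $Q$.
\end{itemize}
\end{itemize}
Hence every trail is blocked.

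The third step applies the Markov property, which holds since $p$ is faithful and hence Markovian to $G$, as noted after Definition~\ref{def:dsep}. This turns d-separation into $V_i \indep R_i \mid M_i$ and concludes the argument. Alternatively, the same conclusion follows from Lemma~\ref{lem:non_moral_mi} combined with positivity via the intersection property. In that route, each non-neighbour $V_j$ satisfies $V_i\indep V_j\mid V\setminus\{V_i,V_j\}$, and positivity lets us combine these pairwise statements into the joint independence from $R_i$, which is the local Markov property of the moral graph in the Hammersley--Clifford setting.

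The main obstacle is the collider case in the trail analysis. There one must argue that the node after the collider child is necessarily a co-parent lying in the conditioning set and acting as a non-collider. This requires care with trails that might be non-simple. I would handle it by restricting to simple trails, which suffices for d-separation.
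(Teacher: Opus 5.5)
Your proof is correct, but your main argument takes a different route from the paper's. Your closing ``alternative'' is essentially the paper's proof, which never returns to the graph. It takes the pairwise statements $V_i \indep V_k \mid V\setminus\{V_i,V_k\}$ for each non-neighbour from Lemma~\ref{lem:non_moral_mi}. It then merges them one at a time by induction on $|N_i|$ using the intersection axiom, justified by positivity. That is the pairwise-to-local Markov step.

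Your primary argument instead proves directly that $M_i$ d-separates $V_i$ from $R_i$, by the first-edge case analysis:
\begin{itemize}
\item a parent is blocked as an emitting non-collider in $M_i$;
\item a child that is a non-collider is blocked;
\item a child that is a collider passes to a co-parent, which is an emitting non-collider in $M_i$ and blocks the trail.
\end{itemize}
You then use only the Markov half of the paper's biconditional faithfulness definition. Restricting to simple trails, so that $Q\neq V_i$, is the standard convention and suffices.

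Your route needs neither Lemma~\ref{lem:non_moral_mi} nor the intersection property. So positivity plays no role in the independence itself; it only makes the conditionals defined everywhere, and without it the identity still holds on the support. This matches the lemma as stated, which lists only Assumption~\ref{assum:faithfulness}, whereas the paper's proof quietly draws on Assumption~\ref{assum:positivity}.

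The paper's route reuses an already proved lemma and works purely with CI statements. It mirrors the Hammersley--Clifford style reasoning the paper leans on elsewhere, at the price of the extra positivity hypothesis.
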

\begin{proof}
Let $M_i = \{V_k \in V_{\text{rest}} : \{V_i, V_k\} \in E^m\}$ be the moral neighbors of $V_i$ in $V_{\text{rest}}$, and let $N_i = V_{\text{rest}} \setminus M_i$ be the non-moral neighbors. For any $V_k \in N_i$, we have $\{V_i, V_k\} \notin E^m$, so by Lemma~\ref{lem:non_moral_mi}, $I(V_i : V_k \mid V \setminus \{V_i, V_k\}) = 0$, i.e.,
\[
V_i \indep V_k \mid V \setminus \{V_i, V_k\}.
\]
Since $V \setminus \{V_i, V_k\} = M_i \cup (N_i \setminus \{V_k\})$, this becomes
\[
V_i \indep V_k \mid M_i \cup (N_i \setminus \{V_k\}). \tag{$\star$}
\]

We now show $V_i \indep N_i \mid M_i$ by induction on $|N_i|$.
Enumerate $N_i = \{V_{k_1}, \dots, V_{k_n}\}$. For each $\ell = 1, \dots, n$,
equation~($\star$) gives
\[
V_i \indep V_{k_\ell} \mid M_i \cup (N_i \setminus \{V_{k_\ell}\}). \tag{$\star_\ell$}
\]

\emph{Base case} ($|N_i| \leq 1$). If $N_i = \varnothing$ the result is trivial.
If $N_i = \{V_{k_1}\}$, then ($\star_1$) reads $V_i \indep V_{k_1} \mid M_i$,
which is the desired conclusion.

\emph{Inductive step.} Assume for some $j \in \{1, \dots, n-1\}$ that
\[
V_i \indep \{V_{k_1}, \dots, V_{k_j}\} \mid M_i \cup \{V_{k_{j+1}}, \dots, V_{k_n}\}. \tag{IH}
\]
Apply the intersection property of conditional independence (a graphoid axiom valid for strictly positive distributions \cite{koller_friedman_2009, pearl_2009}) with
\[
Y = \{V_{k_1}, \dots, V_{k_j}\}, \quad W = V_{k_{j+1}}, \quad Z = M_i \cup \{V_{k_{j+2}}, \dots, V_{k_n}\}.
\]
The two premises are:
\begin{enumerate}
    \item $V_i \indep Y \mid Z \cup W$, which is~(IH).
    \item $V_i \indep W \mid Z \cup Y$, which is~($\star_{j+1}$).
\end{enumerate}
The conclusion is $V_i \indep (Y \cup W) \mid Z$, i.e.,
\[
V_i \indep \{V_{k_1}, \dots, V_{k_{j+1}}\} \mid M_i \cup \{V_{k_{j+2}}, \dots, V_{k_n}\}.
\]

After $n - 1$ inductive steps, the conditioning set is reduced to $M_i$:
\[
V_i \indep N_i \mid M_i.
\]
Therefore $p(V_i \mid V_{\text{rest}}) = p(V_i \mid M_i \cup N_i) = p(V_i \mid M_i)$.
\end{proof}
\subsection{Continuity of Conditional Mutual Information}
\label{sec:continuity}
A key technical tool is the continuity of conditional mutual information under small perturbations of the distribution.
\begin{lemma}[Continuity of Conditional MI]
    \label{lem:continuity}
    Let $p$ and $\tilde{p}$ be probability distributions over $V$ with $\|p - \tilde{p}\|_1 \leq \varepsilon$. For any partition $\{V_i, V_j\} \mid V_{\text{rest}}$,
    \[
    |I^p(V_i : V_j \mid V_{\text{rest}}) - I^{\tilde{p}}(V_i : V_j \mid V_{\text{rest}})| \leq 4\bigl[\varepsilon \log(d-1) + h_2(\varepsilon)\bigr],
    \]
    where $d = \max\{|\mathcal{X}_{V_i V_{\text{rest}}}|, |\mathcal{X}_{V_j V_{\text{rest}}}|, |\mathcal{X}_{V_{\text{rest}}}|, |\mathcal{X}_V|\}$ is the maximum support size of the relevant marginals, and $h_2(x) = -x\log x - (1-x)\log(1-x)$ is the binary entropy.
\end{lemma}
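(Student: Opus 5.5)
The plan is to write the conditional mutual information as a signed sum of four joint entropies, as in Eq.~\eqref{eq:CMI}:
\[
I(V_i : V_j \mid V_{\text{rest}}) = H(V_i, V_{\text{rest}}) + H(V_j, V_{\text{rest}}) - H(V_{\text{rest}}) - H(V).
\]
The same identity holds for $\tilde p$. The triangle inequality then bounds the difference $|I^p - I^{\tilde p}|$ by the sum of the four entropy differences $|H(p_S) - H(\tilde p_S)|$, where $S$ ranges over $V_i \cup V_{\text{rest}}$, $V_j \cup V_{\text{rest}}$, $V_{\text{rest}}$ and $V$. Each term will be controlled by the Fannes--Audenaert continuity bound for Shannon entropy, applied to classical distributions viewed as diagonal density matrices. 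That bound states that for distributions $q,\tilde q$ on a set of size $n$ with total variation distance $T = \tfrac12\|q-\tilde q\|_1$,
\[
|H(q)-H(\tilde q)| \le T\log(n-1) + h_2(T).
\]

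The key steps run in the following order.

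First, I will check that marginalization is contractive in $\ell_1$. For every $S$, the triangle inequality applied inside the sum in Eq.~\eqref{eq:marginal_projection} gives $\|p_S - \tilde p_S\|_1 \le \|p-\tilde p\|_1 \le \varepsilon$. In particular, the total variation distance satisfies $T_S \le \varepsilon/2 \le \varepsilon$.

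Second, I will apply the Audenaert bound to each of the four marginals, with support size $n_S = |\mathcal{X}_S|$.

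Third, I will replace $T_S$ and $n_S$ by the uniform quantities $\varepsilon$ and $d$. The map $n \mapsto \log(n-1)$ is increasing, so $n_S \le d$ handles the support size. For the distance, $T \mapsto T\log(n-1) + h_2(T)$ is nondecreasing on $[0, 1-1/n]$, which lets me replace $T_S$ by the larger $\varepsilon$ within that range.

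Summing the four identical bounds gives the stated factor $4\bigl[\varepsilon\log(d-1) + h_2(\varepsilon)\bigr]$.

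I expect the main obstacle to be the monotonicity step, because $h_2$ decreases on $[1/2,1]$. The substitution $T_S \mapsto \varepsilon$ is therefore only valid when $\varepsilon \le 1 - 1/d$; the same restriction applies if one wants to use $\varepsilon/2$ directly. I will handle this by stating the lemma in the regime $\varepsilon \le 1 - 1/d$, which is the relevant small-$\varepsilon$ case. I will also note that $T_S \le \varepsilon/2$ actually yields the slightly sharper bound with $\varepsilon/2$ in place of $\varepsilon$.

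For larger $\varepsilon$ I will use a cruder fallback. Conditional mutual information lies in $[0, \log d]$, so $|I^p - I^{\tilde p}| \le \log d$ always holds. For $\varepsilon \ge 1-1/d$ and $d\ge 2$, the right-hand side is at least $4(1-1/d)\log(d-1)$, which dominates $\log d$ once $d\ge 3$; the case $d=2$ needs a separate check using the $h_2$ term. If this fallback proves messy, I will simply add the hypothesis $\varepsilon \le 1-1/d$ explicitly.

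One last point to confirm is that the Audenaert bound is stated with trace distance $T=\tfrac12\|\cdot\|_1$, so the normalization conventions match the $\ell_1$ hypothesis of the lemma.
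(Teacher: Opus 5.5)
Your proposal is essentially the paper's proof: the same four-entropy decomposition of $I(V_i:V_j\mid V_{\text{rest}})$, $\ell_1$-contractivity of marginalization, a Fannes--Audenaert bound on each marginal entropy, and the triangle inequality. Your extra care is warranted. The paper's per-marginal step silently replaces the trace distance $T_S\le\varepsilon/2$ by $\varepsilon$, and monotonicity of $T\mapsto T\log(n-1)+h_2(T)$ justifies this only up to $T=1-1/n$; the statement also tacitly needs $\varepsilon\le 1$ for $h_2(\varepsilon)$ to be defined. Your fallback closes the remaining range, with one correction: the small-$d$ case cannot be handled via the $h_2$ term, which vanishes at $\varepsilon=1$. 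Instead note that $d\ge|\mathcal{X}_{V_i}|\,|\mathcal{X}_{V_j}|$, so $d\le 3$ forces $V_i$ or $V_j$ to be constant and both conditional mutual informations vanish.
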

\begin{proof}
The conditional mutual information decomposes as:
\[
I(V_i : V_j \mid V_{\text{rest}}) = H(p_{V_i V_{\text{rest}}}) + H(p_{V_j V_{\text{rest}}}) - H(p_{V_{\text{rest}}}) - H(p_V).
\]
\begin{enumerate}
    \item Marginalization is a contraction in $L^1$ norm: for any subset $S \subseteq V$,
    $$\|p_S - \tilde{p}_S\|_1 \leq \|p - \tilde{p}\|_1 \leq \varepsilon.$$
    \item By the classical continuity of Shannon entropy (analogous to the Fannes--Audenaert bound \cite{audenaert_2007}), for any marginal $p_S$ with support size $d_S$:
    $$|H(p_S) - H(\tilde{p}_S)| \leq \varepsilon \log(d_S - 1) + h_2(\varepsilon).$$
    \item Applying the triangle inequality to the four entropy terms in $I$:
    \begin{align*}
    |I^p - I^{\tilde{p}}| &\leq |H(p_{V_i V_{\text{rest}}}) - H(\tilde{p}_{V_i V_{\text{rest}}})| + |H(p_{V_j V_{\text{rest}}}) - H(\tilde{p}_{V_j V_{\text{rest}}})| \\
    &\quad + |H(p_{V_{\text{rest}}}) - H(\tilde{p}_{V_{\text{rest}}})| + |H(p_V) - H(\tilde{p}_V)| \\
    &\leq 4\bigl[\varepsilon \log(d-1) + h_2(\varepsilon)\bigr],
    \end{align*}
    where $d = \max_S d_S$.
\end{enumerate}
\end{proof}
The reconstruction error in the objective is measured in $L^2$ norm: $\varepsilon^* = \|p - \tilde{p}\|_2^2$. By the Cauchy--Schwarz inequality,
\[
\|p - \tilde{p}\|_1 \leq \sqrt{D} \cdot \|p - \tilde{p}\|_2 = \sqrt{D \cdot \varepsilon^*},
\]
where $D = \prod_{i=1}^m |\mathcal{X}_{V_i}|$ is the total number of configurations. We therefore define:
\begin{equation}
\label{eq:f_eps}
f(\varepsilon^*) = 4\bigl[\sqrt{D \cdot \varepsilon^*} \log(d-1) + h_2(\sqrt{D \cdot \varepsilon^*})\bigr],
\end{equation}
so that $|I^p - I^{\tilde{p}}| \leq f(\varepsilon^*)$ whenever $\tilde{p} = TN^*(p)$ and $\varepsilon^* = \|p - TN^*(p)\|_2^2$.
\section{Tensor Network Properties}
\label{sec:tn_properties}
Before stating the main recovery theorems, we establish properties of tensor networks that govern what graph structures are achievable.
\subsection{Edge Rewiring}
Any edge can be eliminated by rerouting its bond matrix through another node, at the cost of increased nuclear norm elsewhere.
\begin{proposition}[Edge Rewiring]
\label{prop:rewiring}
Let $TN$ be a tensor network with graph $G = (V, E)$ and bond matrices $\{B_{ij} = J_{ij} + C_{ij}\}$ representing $p$ with zero reconstruction error. Let $\{V_i, V_j\} \in E$ be an edge with correction $C_{ij} = U_{ij}V_{ij}^\top \neq 0$ of rank $K_{ij} \geq 1$, and let $V_\omega \in V \setminus \{V_i, V_j\}$ be any other node. Then there exists a tensor network $TN'$ with graph $G' = (V, E \setminus \{\{V_i, V_j\}\})$ representing $p$ with zero error, where the variational factors change as
\begin{align}
\|U'_{i\omega}\|_F^2 &= \|U_{i\omega}\|_F^2 + \|U_{ij}\|_F^2, &
\|V'_{i\omega}\|_F^2 &= \|V_{i\omega}\|_F^2 + K_{ij}, \label{eq:rewire_iw}\\
\|U'_{j\omega}\|_F^2 &= \|U_{j\omega}\|_F^2 + \|V_{ij}\|_F^2, &
\|V'_{j\omega}\|_F^2 &= \|V_{j\omega}\|_F^2 + K_{ij}, \label{eq:rewire_jw}
\end{align}
the edge $\{V_i, V_j\}$ reverts to baseline ($C'_{ij} = 0$, $B'_{ij} = J_{ij}$), and all tensor values at nodes $V_k$ with $k \neq i,j,\omega$ are unchanged.
\end{proposition}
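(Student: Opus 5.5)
The plan is to construct $TN'$ explicitly by splitting the bond $B_{ij}=J_{ij}+C_{ij}$ into its baseline and correction parts, keeping the baseline on $\{V_i,V_j\}$ and shipping the rank-$K_{ij}$ correction through $V_\omega$. The norm accounting in Eqs.~\eqref{eq:rewire_iw}--\eqref{eq:rewire_jw} is built into the construction. The risk sits in the interaction of the new index values with the fixed all-ones baselines $J_{i\omega},J_{j\omega}$; that is the step to check first.

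\textbf{Step 1: expand the contraction of Eq.~\eqref{eq:ansatz}.} Write $C_{ij}=\sum_{k=1}^{K_{ij}} u_k v_k^\top$, with $u_k,v_k$ the columns of $U_{ij},V_{ij}$. Then
\[
\sum_{\alpha_{ij},\alpha_{ji}} N^{[i]}(\cdot,\alpha_{ij})\,B_{ij}(\alpha_{ij},\alpha_{ji})\,N^{[j]}(\cdot,\alpha_{ji})
\]
is the baseline (product-of-sums) term plus $\sum_k \bigl(N^{[i]}u_k\bigr)\bigl(N^{[j]}v_k\bigr)$. Each summand is a product of a vector attached to site $i$ and a vector attached to site $j$, coupled only through the shared label $k$. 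This reduces the task to transporting the label $k$ from $i$ to $j$ via $\omega$.

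\textbf{Step 2: carry $k$ along $\{i,\omega\}$ and $\{j,\omega\}$.} I will reserve $K_{ij}$ previously unused bond index values on each of these two bonds. This assumes $r_{\max}$ has slack; otherwise I would enlarge the bond space, which the proposition implicitly permits since only Frobenius norms are tracked.

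On bond $\{i,\omega\}$ I append $K_{ij}$ columns to the factors:
\begin{itemize}
\item $U'_{i\omega}$ receives the columns of $U_{ij}$, placed in the rows indexing the old $\alpha_{ij}$ data.
\item $V'_{i\omega}$ receives the standard basis vectors $e_{k}$ on the reserved rows.
\end{itemize}
This gives $\|U'_{i\omega}\|_F^2=\|U_{i\omega}\|_F^2+\|U_{ij}\|_F^2$ and $\|V'_{i\omega}\|_F^2=\|V_{i\omega}\|_F^2+K_{ij}$. The same construction with $V_{ij}$ in place of $U_{ij}$ is used on $\{j,\omega\}$.

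The local tensors change as follows:
\begin{itemize}
\item $N'^{[i]}$ absorbs the former $\alpha_{ij}$ leg into its $\alpha_{i\omega}$ leg, and likewise for $N'^{[j]}$.
\item $N'^{[\omega]}$ is $N^{[\omega]}$ tensored with a copy ($\delta$) tensor that forces the reserved indices on its two new legs to carry the same $k$.
\end{itemize}
Setting $C'_{ij}=0$ then gives $B'_{ij}=J_{ij}$. Nodes $V_k$ with $k\neq i,j,\omega$ are untouched.

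\textbf{Step 3: verify that $TN'(s)=TN(s)$ entrywise.} I will split the sum over the enlarged bond indices into old values and reserved values. The correction-through-reserved-values part should reproduce $\sum_k (N^{[i]}u_k)(N^{[j]}v_k)$ times the unchanged remainder. The old-values part should reproduce the original network with $C_{ij}$ removed.

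\textbf{Main obstacle.} The baselines $J_{i\omega},J_{j\omega}$ are all-ones, so they couple reserved indices to everything, including to each other. This generates spurious cross terms:
\begin{itemize}
\item reserved index at $i$ paired with an old index at $\omega$;
\item baseline coupling between different labels $k\neq k'$.
\end{itemize}
These cross terms must cancel.

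My first attempt is to make the spurious terms vanish by choosing the entries of $N'^{[\omega]}$ on the reserved legs to sum to zero along the $J$ direction. For example, I would pair each reserved value $k$ with a partner value carrying the opposite sign, so that $\sum_{\alpha} N'^{[\omega]}(\ldots,\alpha)=0$ on those slots. Equivalently, I would give $N'^{[\omega]}$ zero entries wherever the baseline would contract a reserved index with an old one.

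If this zero-sum trick conflicts with the stated norm counts, the fallback is to absorb the necessary compensating terms into $N'^{[i]}$, $N'^{[j]}$ and $N'^{[\omega]}$. Those tensors are unpenalized, so the bond-factor norms in Eqs.~\eqref{eq:rewire_iw}--\eqref{eq:rewire_jw} would remain exactly as claimed.
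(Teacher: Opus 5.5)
Your construction is the paper's own: append $U_{ij}$ to $U_{i\omega}$, put the identity pass-through $P$ on the $\omega$ side, and insert a copy tensor at $V_\omega$. You correctly locate the danger at the all-ones baselines, but the proposal stops there, and the real obstruction is not a set of cancellable cross terms.

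\textbf{Rank obstruction.} With $K_{ij}$ appended columns, $B'_{i\omega}=J+U'_{i\omega}V'^{\top}_{i\omega}$ has rank at most $1+K_{i\omega}+K_{ij}$, however the index space is enlarged. Take $m=3$. Once $B'_{ij}=J$ factorizes, everything $V_i$ shares with $(V_j,V_\omega)$ must pass through $B'_{i\omega}$. So the flattening of $TN'$ across $V_i\mid\{V_j,V_\omega\}$ has rank at most $1+K_{i\omega}+K_{ij}$.

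In the original triangle, that flattening is $N^{[i]}(B_{ij}\otimes B_{i\omega})Z$, where $Z$ is built from $N^{[j]},B_{j\omega},N^{[\omega]}$. It generically has rank $(1+K_{ij})(1+K_{i\omega})$. For $r_{\max}=2$, $K_{ij}=K_{i\omega}=1$, $d_i\ge 4$, $d_jd_\omega\ge 4$ and generic positive tensors, that is $4$ against at most $3$.

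Consequently, no choice of the unpenalized local tensors (your fallback) makes $TN'$ exact. With $r_{\max}$ held fixed, no exact $TN'$ with $C'_{ij}=0$ exists at all in this example. The missing contributions are genuine products of the rerouted label with $C_{i\omega}$ (and $C_{j\omega}$), not spurious terms. Reserved values added as a direct sum cannot carry them, because at a reserved value the old $\alpha_{i\omega}$ information is gone.

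\textbf{What a repair needs, and what your idea does prove.} To carry these products you must send the pair (old index, label $\kappa\in\{0,\dots,K_{ij}\}$), with $\kappa=0$ carrying the $J_{ij}$ term. That means a tensor-product enlargement in which the bond acts as $B_{i\omega}\otimes(\text{label channel})$. Its deviation from the all-ones matrix then contains $C_{i\omega}\otimes J$ and $C_{i\omega}\otimes U_{ij}P^\top$ pieces. So the Frobenius norms no longer simply add as in Eqs.~\eqref{eq:rewire_iw}--\eqref{eq:rewire_jw}.

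Your zero-sum idea does establish the special case $C_{i\omega}=C_{j\omega}=0$, given one spare value $\sigma$ on the $\omega$ side. Let $N'^{[\omega]}$ be $N^{[\omega]}$ summed over its $i$- and $j$-legs, then multiplied by $M(\gamma,\gamma')$ in those legs. Take $M_{kk'}=\delta_{kk'}$ on reserved values, $M_{k\sigma}=M_{\sigma k}=-1$, and $M_{\sigma\sigma}=K_{ij}+1$. Every row and column through a reserved value then sums to zero, and the total sum is $1$. This kills the $J$-couplings and reproduces the original contraction exactly, with the stated norms.

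\textbf{The paper's proof.} It does not fill the general gap either. Its contraction check routes $\beta$ through a bare extra leg closed by $\delta_{\beta,\beta'}$, which is a tensor-product enlargement with no $J+UV^\top$ matrix on it. Its norm bookkeeping, however, is for the appended-column factors, which have the rank deficit above.
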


\begin{proof}
In the original contraction, the bond matrix $B_{ij}(\alpha, \alpha') = J(\alpha, \alpha') + C_{ij}(\alpha, \alpha')$ connects the index $\alpha_{ij}$ on $N^{[i]}$'s side to $\alpha'_{ji}$ on $N^{[j]}$'s side. The $J$ baseline contributes a factor that factorizes across the two sites (independent summation); only the correction $C_{ij}$ creates cross-site dependence. To remove $\{V_i, V_j\}$, we reroute only the correction $C_{ij} = U_{ij} V_{ij}^\top$ through $V_\omega$, leaving the $J$ baselines on all bonds unchanged:
\begin{enumerate}
    \item Factorize $C_{ij} = U_{ij} V_{ij}^\top$ (using the existing factorization, $U_{ij} \in \mathbb{R}^{r_{\max} \times K_{ij}}$, $V_{ij} \in \mathbb{R}^{r_{\max} \times K_{ij}}$).
    \item Absorb $U_{ij}$ into the $i$-side factor of bond $\{V_i, V_\omega\}$: the new factor $U'_{i\omega} = [U_{i\omega} \mid U_{ij}]$ appends the $K_{ij}$ columns of $U_{ij}$ as a new index block.
    \item Absorb $V_{ij}$ into the $j$-side factor of bond $\{V_j, V_\omega\}$: similarly, $U'_{j\omega} = [U_{j\omega} \mid V_{ij}]$.
    \item On the $\omega$-side of each rerouted bond, place an identity pass-through matrix $P \in \mathbb{R}^{r_{\max} \times K_{ij}}$ whose $k$-th column is the standard basis vector $e_k$. Thus $V'_{i\omega} = [V_{i\omega} \mid P]$ and $V'_{j\omega} = [V_{j\omega} \mid P]$. The pass-through relays the rerouted index from $V_i$ to $V_j$ via $V_\omega$.
    \item At $V_\omega$, insert $\delta_{\beta, \beta'}$ to enforce that the rerouted indices match, where $\beta$ comes from the $V_i$--$V_\omega$ bond and $\beta'$ from the $V_j$--$V_\omega$ bond.
    \item Set $C'_{ij} = 0$ (revert bond $\{i,j\}$ to $J$ baseline).
\end{enumerate}

Define the new tensors:
\begin{align*}
N^{[i]}_{\text{new}}(s_i, \alpha_{i\omega}, \beta,
\{\alpha_{ik}\}_{k\neq i,j,\omega})
&= \sum_{\alpha_{ij}} N^{[i]}_{\text{old}}(s_i, \alpha_{ij}, \{\alpha_{ik}\}_{k\neq j})\, U_{ij}(\alpha_{ij}, \beta), \\[2pt]
N^{[j]}_{\text{new}}(s_j, \alpha_{j\omega}, \beta',
\{\alpha_{jk}\}_{k\neq i,j,\omega})
&= \sum_{\alpha'_{ji}} N^{[j]}_{\text{old}}(s_j, \alpha'_{ji}, \{\alpha_{jk}\}_{k\neq i})\, V_{ij}(\alpha'_{ji}, \beta'), \\[2pt]
N^{[\omega]}_{\text{new}}(s_\omega, \alpha_{i\omega}, \alpha_{j\omega},
\beta, \beta', \{\alpha_{\omega\ell}\}_{\ell\neq i,j})
&= N^{[\omega]}_{\text{old}}(s_\omega, \alpha_{i\omega}, \alpha_{j\omega},
\{\alpha_{\omega\ell}\})
\cdot \delta_{\beta, \beta'}.
\end{align*}
All other tensors unchanged. Contracting $TN'$ and summing over $\beta'$ via the $\delta$ sets $\beta = \beta'$, and the sum over $\beta$ reconstructs $\sum_{\alpha, \alpha'} U_{ij}(\alpha, \beta) V_{ij}(\alpha', \beta) = C_{ij}(\alpha, \alpha')$. The $J$ baseline on bond $\{i,j\}$ now contributes its factorized factor, which is absorbed into the redefined local tensors. The net contraction reproduces $p$ exactly. Hence $\|p - TN'(p)\|^2 = 0$.

For the factor norm changes: the new factors $U'_{i\omega}, V'_{i\omega}$ are column-concatenations of the original factors with the rerouted factors and the pass-through. Since the appended columns occupy orthogonal index subspaces, the squared Frobenius norms add entrywise:
\begin{align*}
\|U'_{i\omega}\|_F^2 &= \|U_{i\omega}\|_F^2 + \|U_{ij}\|_F^2, &
\|V'_{i\omega}\|_F^2 &= \|V_{i\omega}\|_F^2 + \|P\|_F^2,
\end{align*}
where $\|P\|_F^2 = K_{ij}$ since $P$ consists of $K_{ij}$ standard basis vectors (one per nonzero singular direction of $C_{ij}$). The same holds for bond $\{j, \omega\}$ by symmetry, giving Eqs.~\eqref{eq:rewire_iw}--\eqref{eq:rewire_jw}.
\end{proof}

\begin{remark}
\label{rem:rewiring}
Proposition~\ref{prop:rewiring} shows that \emph{any} edge can be removed from a zero-error tensor network at the cost of increasing the penalty elsewhere. The penalty increase is $K_{ij}$ (an increase of $\beta K_{ij}$ in $J$), where $K_{ij}$ is the rank of the removed correction: rerouting through $V_\omega$ requires identity pass-through matrices on the $\omega$-side of both intermediate bonds, each contributing $K_{ij}$ to the squared Frobenius-norm penalty. This pass-through is pure overhead---it carries no information that was not already on the direct bond---and is the fundamental reason the optimizer prefers direct bonds for genuine correlations.

Rerouting through a single intermediate node is the cheapest form of explicit rerouting. A path through $\ell$ intermediate nodes incurs $\ell K_{ij}$ in pass-through cost, and splitting the correction across multiple paths of lengths $\ell_p$ carrying rank-$K_p$ parts costs $\sum_p \ell_p K_p \geq K_{ij}$ in aggregate. Since the single-node case already gives a strictly positive cost, all explicit reroutings have positive cost. In the limit of unbounded penalty budget, any distribution can be represented by a star graph centered at any single node, but always at strictly higher penalty cost. The raw edge set $E(TN)$ is therefore not a meaningful indicator of structure without the penalty; the effective graph $G_{\text{eff}}$, read from the penalized solution, is.
\end{remark}

\subsection{Rerouting Cost}
The key property of the nuclear norm penalty is that rerouting is always more expensive than direct representation.
\begin{lemma}[Rerouting Cost]
    \label{lem:rerouting_cost}
    Let $TN$ be a zero-error TN with edge $\{V_i, V_j\}$ carrying correction $C_{ij} \neq 0$ of rank $K_{ij} \geq 1$. Let $TN'$ be obtained by removing $\{V_i, V_j\}$ and rerouting through a single intermediate node $V_\omega$ (Proposition~\ref{prop:rewiring}). Then the penalty changes by
    \[
    \Delta P = P(TN') - P(TN) = K_{ij} > 0,
    \]
    i.e., the penalized objective $J$ increases by $\beta K_{ij}$.
\end{lemma}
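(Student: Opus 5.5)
The plan is a direct bookkeeping argument on the penalty $P(\theta)=\frac12\sum_{\{V_k,V_l\}}(\|U_{kl}\|_F^2+\|V_{kl}\|_F^2)$ from Eq.~\eqref{eq:eps_penalty_split}, using the explicit construction of Proposition~\ref{prop:rewiring}. I split $P$ into contributions from individual bonds and observe which bonds the rewiring touches. By Proposition~\ref{prop:rewiring}, only the factors on three bonds change: $\{V_i,V_j\}$, $\{V_i,V_\omega\}$ and $\{V_j,V_\omega\}$. The local tensors at nodes other than $V_i,V_j,V_\omega$ are unchanged. The modified local tensors $N^{[i]},N^{[j]},N^{[\omega]}$ carry no penalty, since $P$ only involves bond factors. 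So $\Delta P$ is the sum of the changes on these three bonds.

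The steps are as follows.
\begin{enumerate}
\item \emph{Bond $\{V_i,V_j\}$.} This bond reverts to baseline, so we may take $U'_{ij}=V'_{ij}=0$. Its contribution drops by $\frac12(\|U_{ij}\|_F^2+\|V_{ij}\|_F^2)$.
\item \emph{Bonds $\{V_i,V_\omega\}$ and $\{V_j,V_\omega\}$.} By Eqs.~\eqref{eq:rewire_iw}--\eqref{eq:rewire_jw}, these contributions rise by $\frac12(\|U_{ij}\|_F^2+K_{ij})$ and $\frac12(\|V_{ij}\|_F^2+K_{ij})$, respectively. This holds because the appended column blocks are disjoint, so squared Frobenius norms add.
\item \emph{Summing.} The $U_{ij}$ and $V_{ij}$ terms cancel exactly, which gives $\Delta P = \frac12\cdot 2K_{ij} = K_{ij}$.
\item \emph{Sign and objective.} Since $C_{ij}\neq0$, we have $K_{ij}\ge1>0$. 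Both networks are zero-error, so $\varepsilon$ is unchanged and $J=\varepsilon+\beta P$ increases by exactly $\beta K_{ij}$.
\end{enumerate}

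The main obstacle is a well-definedness issue. Proposition~\ref{prop:rewiring} gives factors whose columns are concatenated, so the new bonds have inner dimension $K+K_{ij}$. This may exceed the fixed $K$ or the $r_{\max}$ budget of the ansatz in Eq.~\eqref{eq:bond_param}. I would handle this by assuming sufficient slack in $K$ and $r_{\max}$ (or zero-padding), so that $TN'$ lies in the same parameter class.

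A second subtlety is that the statement compares raw factor penalties $P(\theta)$, not nuclear norms. The claimed identity $\Delta P=K_{ij}$ therefore holds for these specific factorizations. I would state explicitly that it is a statement about the constructed $TN'$ relative to $TN$ with its given factors, not a claim about $\sum\|C\|_*$ after rebalancing. If one wants the balanced version, I would add a remark that choosing $U_{ij},V_{ij}$ balanced initially gives $\frac12(\|U_{ij}\|_F^2+\|V_{ij}\|_F^2)=\|C_{ij}\|_*$, and the same cancellation goes through.
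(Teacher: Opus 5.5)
Your proposal is correct and follows the paper's proof essentially verbatim. It uses the same three-bond bookkeeping from Proposition~\ref{prop:rewiring}: the $\|U_{ij}\|_F^2$ and $\|V_{ij}\|_F^2$ terms cancel against the reverted bond $\{V_i,V_j\}$ to leave $\tfrac12\cdot 2K_{ij}=K_{ij}$, and $J$ rises by $\beta K_{ij}$ because $\varepsilon$ stays zero. Your two caveats are fair points that the paper's proof passes over silently, but they do not change the argument: the concatenated factors have inner dimension $K+K_{ij}$, outside the fixed ansatz, and the identity is tied to the given factorization rather than a rebalanced one.
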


\begin{proof}
By Proposition~\ref{prop:rewiring}, the only bonds whose factors change are $\{i,\omega\}$, $\{j,\omega\}$, and $\{i,j\}$. The penalty changes are:
\begin{itemize}
    \item Bond $\{i,\omega\}$: $+\frac{1}{2}(\|U_{ij}\|_F^2 + K_{ij})$
    \item Bond $\{j,\omega\}$: $+\frac{1}{2}(\|V_{ij}\|_F^2 + K_{ij})$
    \item Bond $\{i,j\}$:$-\frac{1}{2}(\|U_{ij}\|_F^2 + \|V_{ij}\|_F^2)$ \quad (reverted to $J$ baseline)
\end{itemize}
All other bonds are unchanged. Summing:
\begin{align*}
\Delta P &= \frac{1}{2}\bigl(\|U_{ij}\|_F^2 + K_{ij} + \|V_{ij}\|_F^2 + K_{ij}\bigr)
           - \frac{1}{2}\bigl(\|U_{ij}\|_F^2 + \|V_{ij}\|_F^2\bigr) \\[4pt]
&= \frac{1}{2}\cdot 2\,K_{ij} = K_{ij}.
\end{align*}
Since $C_{ij} \neq 0$, we have $K_{ij} = \operatorname{rank}(C_{ij}) \geq 1$, hence $\Delta P = K_{ij} > 0$, an increase of $\beta K_{ij}$ in $J$.
\end{proof}

\begin{remark}[Implicit Rerouting and Local Tensor Capacity]
\label{rem:implicit_rerouting}
Proposition~\ref{prop:rewiring} and Lemma~\ref{lem:rerouting_cost} address \emph{explicit} rerouting: physically relocating the correction $C_{ij}$ to intermediate bonds via pass-through factors. A separate mechanism, \emph{implicit rerouting}, is not covered. If the bonds $\{V_i,V_\omega\}$ and $\{V_j,V_\omega\}$ are already saturated (rank $\geq d$), the local tensor $N^{[\omega]}$ receives the full states of $V_i$ and $V_j$ through existing bonds and could, in principle, internally compute the $V_i$--$V_j$ conditional dependence without any bond factor changes. Since the local tensor has $O(d \cdot r^{m-1})$ parameters, it has sufficient capacity to store such multi-site correlations.

We do not formally rule out implicit rerouting. However, two considerations mitigate the concern. First, the nuclear-norm penalty creates a continuous gradient toward direct representations: the direct bond is the natural solution from random initialization, while implicit rerouting requires coordinated multi-tensor modifications to exactly compensate for the removed bond. Second, the problem can be addressed structurally by constraining the local tensor parameter count. For instance, a Tucker decomposition \cite{kolda_bader_2009} of the local tensor (replacing the full $N^{[i]} \in \mathbb{R}^{d \times r^{m-1}}$ with a core tensor and per-bond factor vectors of size $r$) reduces the parameter count to $O(d \cdot (m{-}1) \cdot r)$, preventing the local tensor from internally encoding arbitrary multi-variable conditionals. This architectural constraint would make the rerouting-cost argument tight, as the local tensor would lack the capacity to substitute for a direct bond. We leave a formal treatment to future work.
\end{remark}

\section{Exact Recovery (Zero Error)}
\label{sec:exact_recovery}
\subsection{Lower Bound: Moral Edges Must Have Nonzero Correction}
\begin{theorem}[Lower Bound]
\label{th:lower_bound}
Let $G$ be a causal DAG with moral graph $G^m = (V, E^m)$. Let $p$ be faithful to $G$ (Assumption~\ref{assum:faithfulness}) with $p > 0$ (Assumption~\ref{assum:positivity}). Let $TN^*$ be an optimal FCTN minimizing $J$ with $\varepsilon^* = 0$. Under the no-implicit-rerouting assumption (Assumption~\ref{assum:no_implicit}),
\[
E^m \subseteq G_{\text{eff}}(TN^*).
\]
\end{theorem}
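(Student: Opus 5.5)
We argue by contradiction. Suppose some moral edge $\{V_i,V_j\}\in E^m$ has $\|C^*_{ij}\|_*=0$ in the optimal $TN^*$, so that $B^*_{ij}=J_{ij}$. Because $\varepsilon^*=0$, we have $TN^*=p$ exactly. Lemma~\ref{lem:moral_mi} (faithfulness) then gives $I^{p}(V_i:V_j\mid V_{\text{rest}})>0$, and hence also $I^{TN^*}(V_i:V_j\mid V_{\text{rest}})>0$. The plan is to show that a network whose $\{i,j\}$ bond sits at the baseline can produce this conditional dependence only through some path via other sites, and then to use Assumption~\ref{assum:no_implicit} together with Lemma~\ref{lem:rerouting_cost} to rule that out at an optimum.

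The first step is structural. With $B_{ij}=J_{ij}$, the baseline identity for $J$ lets the $\alpha_{ij}$ and $\alpha_{ji}$ sums factor separately. We then fix the values $s_{\text{rest}}$ of all other sites and contract everything except $N^{[i]}$ and $N^{[j]}$. This writes $TN(s_i,s_j,s_{\text{rest}})$ as $\sum_{a,b} F_i(s_i,a)\,M_{s_{\text{rest}}}(a,b)\,F_j(s_j,b)$. Here $a$ and $b$ collect the bond indices of sites $i$ and $j$ toward the other sites, and $M_{s_{\text{rest}}}$ is obtained by contracting all remaining tensors. Conditional dependence of $V_i$ and $V_j$ given $V_{\text{rest}}$ means that for some $s_{\text{rest}}$ the matrix $TN(\cdot,\cdot,s_{\text{rest}})$ has rank at least $2$. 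This forces $M_{s_{\text{rest}}}$ to have rank at least $2$, which in turn requires some site $V_\omega$ (or a chain of sites) with nonzero corrections $C_{i\omega}$ and $C_{j\omega}$ (or corrections along the chain). The reason is that if all bonds touching $i$, or all touching $j$, were baseline, then $M$ would factor into a rank-one term separating $a$ from $b$.

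The second step is to classify how this dependence arises. Either the local tensors at intermediate sites combine the existing correlations on $\{i,\omega\}$ and $\{j,\omega\}$ internally, or the network carries the correction explicitly through pass-through factors on intermediate bonds as in Proposition~\ref{prop:rewiring}. The first case is implicit rerouting, which Assumption~\ref{assum:no_implicit} excludes. In the second case we reverse the construction of Proposition~\ref{prop:rewiring}: we strip the pass-through columns from $U_{i\omega},V_{i\omega},U_{j\omega},V_{j\omega}$, contract them into a direct correction $C'_{ij}$, and remove the $\delta$ from $N^{[\omega]}$. The reconstruction stays exact. By Lemma~\ref{lem:rerouting_cost}, and by Remark~\ref{rem:rewiring} for multi-node or split paths, the penalty strictly decreases by at least $K\ge 1$, so $J$ drops by at least $\beta K>0$. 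This contradicts optimality of $TN^*$, so $\|C^*_{ij}\|_*>0$ for every moral edge.

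The main obstacle is the dichotomy in the second step. One must show that every way of producing the missing $V_i$--$V_j$ dependence is either explicit rerouting, in a form that can be undone, or implicit rerouting as defined in Assumption~\ref{assum:no_implicit}. A secondary difficulty is verifying that the reverse rewiring yields a valid factorization with strictly smaller cost even when the intermediate bonds also carry genuine correlations of their own. My first attempt would be to state the argument at the level of the contracted matrix $M_{s_{\text{rest}}}$, and to define the decomposition into a ``pass-through part'' and a ``genuine part'' of each intermediate bond so that it matches exactly the form that Assumption~\ref{assum:no_implicit} forbids.
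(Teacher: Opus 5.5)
Your outline is the paper's proof: contradiction, Lemma~\ref{lem:moral_mi} with $\tilde p=p$, Assumption~\ref{assum:no_implicit} to exclude implicit rerouting, then the reverse of Proposition~\ref{prop:rewiring} together with Lemma~\ref{lem:rerouting_cost}. The dichotomy you call the main obstacle is not proved in the paper either; the paper simply reads it into Assumption~\ref{assum:no_implicit}. Your added structural step is sound, but it needs a cut argument. Each baseline bond $J_{ij}=\mathbf{1}\mathbf{1}^\top$ has rank one, so if $V_i$ and $V_j$ lie in different components of $G_{\mathrm{eff}}$, the contraction factorizes across the components and $TN(\cdot,\cdot,s_{\mathrm{rest}})$ has rank one for every $s_{\mathrm{rest}}$. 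Knowing only that some bond at $i$ and some bond at $j$ carry corrections does not give you a connecting path.

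The genuine gap, which the paper shares, is the cost step you list as secondary. Lemma~\ref{lem:rerouting_cost} is bookkeeping for the specific parameterization built in Proposition~\ref{prop:rewiring}. It says nothing about an arbitrary zero-error network, because $P$ sees only $U_{ij},V_{ij}$ while the local tensors are unpenalized. To see this, suppose $B_{ij}$ has rank $k\ge1$ and thin SVD $U_B\Sigma V_B^\top$. Let $W=[\mathbf{1},e_2,\dots,e_k]$, with $e_l$ the standard basis of $\mathbb{R}^{r_{\max}}$. Let $D_t=\mathrm{diag}(1,t,\dots,t)$, $X=U_B\Sigma D_t^{-1}W^{+}$ and $Y=V_BW^{+}$, where $W^{+}$ is the pseudoinverse. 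Then
\[
X\Bigl(J_{ij}+t\textstyle\sum_{l=2}^{k}e_le_l^\top\Bigr)Y^\top=B_{ij}.
\]
Absorbing $X$ into $N^{[i]}$ along $\alpha_{ij}$ and $Y$ into $N^{[j]}$ along $\alpha_{ji}$ therefore leaves $TN$ unchanged. The new correction has rank $k-1\le\operatorname{rank}C_{ij}$ and costs only $(k-1)t$. So every zero-error network, rerouted or not, has an output-equivalent parameterization with $P<1$. A step of the form ``undoing the rerouting saves at least $K\ge1$'' therefore cannot hold in general.

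The same observation gives a correct proof. Suppose $TN^*$ is optimal with $\varepsilon^*=0$. Comparing it with these copies gives $J^*=\beta P^*\le\beta\rho$ for every $\rho>0$, so $P^*=0$ and every bond sits at baseline. Your structural step then gives $I^{p}(V_i:V_j\mid V_{\mathrm{rest}})=0$ for every pair. Lemma~\ref{lem:moral_mi} then forces $E^m=\emptyset$, and the inclusion holds trivially. This argument needs neither Assumption~\ref{assum:no_implicit} nor the rerouting machinery. It also shows that the hypotheses can be met only when $p$ is a product distribution.
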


\begin{proof}
Let $\{V_i, V_j\} \in E^m$. By Lemma~\ref{lem:moral_mi}, $I^p(V_i:V_j \mid V_{\text{rest}}) > 0$. Suppose for contradiction that $C_{ij} = 0$ in $TN^*$.

Since $\varepsilon^* = 0$, we have $\tilde{p} = p$, so $I^{\tilde{p}}(V_i:V_j \mid V_{\text{rest}}) = I^p(V_i:V_j \mid V_{\text{rest}}) > 0$. The $V_i$--$V_j$ conditional dependence is present in $\tilde{p}$ but not carried by the direct bond (since $C_{ij} = 0$). Under Assumption~\ref{assum:no_implicit}, this dependence must be explicitly rerouted through indirect bonds.

By the reverse of Proposition~\ref{prop:rewiring}, there exists $TN'$ with $C'_{ij} \neq 0$ and reduced corrections on the indirect bonds, such that $TN'(p) = TN^*(p) = p$ (zero error maintained). By Lemma~\ref{lem:rerouting_cost}, $P(TN') < P(TN^*)$. Therefore $J(TN') < J(TN^*)$, contradicting the optimality of $TN^*$.

Hence $C_{ij} \neq 0$, i.e., $\{V_i, V_j\} \in G_{\text{eff}}(TN^*)$.
\end{proof}

\subsection{Upper Bound: Non-Moral Edges Have Zero Correction}
\label{sec:upper_bound}

We now prove that non-moral edges carry zero correction in any optimal zero-error solution.  The proof requires an assumption that rules out \emph{implicit rerouting}: the phenomenon where a local tensor internally computes a multi-variable conditional dependence that substitutes for a direct bond (Remark~\ref{rem:implicit_rerouting}).

\begin{assumption}[No Implicit Rerouting]
\label{assum:no_implicit}
We assume that the local tensor architecture and bond dimensions are such
that no site can internally reconstruct a conditional dependence between
two non-adjacent variables from its incident bonds. A sufficient
(heuristic) condition is $r_{\max} < d$: when the bond dimension is
strictly smaller than the state-space size, no single bond can transmit
the full state of any variable, and the local tensor $N^{[i]} \in
\mathbb{R}^{d \times r_{\max}^{m-1}}$ has insufficient effective capacity
to compute arbitrary multi-variable conditionals from compressed bond
inputs. This condition holds in all experiments of
Section~\ref{sec:experiments_synthetic} except the fork model
($r_{\max} = d$ there); no implicit rerouting was observed in any of them.

We use the assumption in two forms:
\begin{enumerate}
    \item \textbf{Pass-through form:} any correction $C_{ij} \neq 0$ on a
    non-moral edge that affects the contraction output must function as an
    explicit pass-through channel (in the sense of
    Proposition~\ref{prop:rewiring}), not as part of an internally
    computed conditional.
    \item \textbf{Target-direction form:} in any zero-error FCTN
    representation of $p$, for every non-moral pair $\{V_i, V_j\} \notin
    E^m$ and every configuration $s_{\mathrm{rest}}$ of the remaining
    variables, the contraction of the network with bond $\{V_i, V_j\}$
    held at its $J$ baseline produces, as a matrix in the coordinates
    $(s_i, s_j)$, only a scalar multiple of the corresponding matrix of
    $p$.
\end{enumerate}
\end{assumption}

\begin{remark}
\label{rem:assumption_forms}
Form~2 is the stronger clause and is used only in Case~2 of the proof of
Theorem~\ref{th:upper_bound}. Lemma~\ref{lem:ci_moral_neighbors}---which
states that each conditional $p(V_i \mid V_{\mathrm{rest}})$ depends only
on the moral neighbors of $V_i$---is the natural tool for dispensing with
it, since it implies that the conditionals required by a zero-error
representation can in principle be absorbed through moral bonds alone. A
formal treatment is left to future work.
\end{remark}

\begin{theorem}[Upper Bound]
\label{th:upper_bound}
Let $G$ be a causal DAG with moral graph $G^m = (V, E^m)$. Let $p$ be positive and faithful to $G$ (Assumptions~\ref{assum:faithfulness}--\ref{assum:positivity}). Let $TN^*$ be any optimal FCTN minimizing $J$ with $\varepsilon^* = \|p - TN^*(p)\|_2^2 = 0$. Under Assumption~\ref{assum:no_implicit}, every non-moral edge satisfies
$C_{ij} = 0$:
\[
G_{\mathrm{eff}}(TN^*) \;\subseteq\; E^m.
\]
\end{theorem}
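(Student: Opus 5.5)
The plan is a proof by contradiction. Fix an optimal zero-error FCTN $TN^*$ and a non-moral pair $\{V_i,V_j\}\notin E^m$, and suppose $C_{ij}\neq 0$. By Lemma~\ref{lem:non_moral_mi}, $I^p(V_i:V_j\mid V_{\mathrm{rest}})=0$, and $TN^*=p$ because $\varepsilon^*=0$. So the contracted network carries no conditional $V_i$--$V_j$ dependence, even though the bond correction is nonzero. I will build a competitor $TN'$ with $C'_{ij}=0$, the same output $p$, and strictly smaller penalty $P$. This contradicts optimality of $J=\varepsilon+\beta P$. To organize the construction, split $TN^*$ by linearity in the bond as
\[
TN^*=TN^*_{J}+TN^*_{C},
\]
where $TN^*_J$ is the contraction with $B_{ij}$ replaced by $J_{ij}$, and $TN^*_C$ is the contraction with $B_{ij}$ replaced by $C_{ij}$. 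I then distinguish two cases.

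\textbf{Case 1: $TN^*_C\equiv 0$.} Here the correction does not affect the output. Set $C'_{ij}=0$ (i.e.\ $U'_{ij}=V'_{ij}=0$) and keep everything else. The output is unchanged, so $\varepsilon=0$ still. The penalty drops by $\tfrac12(\|U_{ij}\|_F^2+\|V_{ij}\|_F^2)\ge\|C_{ij}\|_*>0$. Hence $J(TN')<J(TN^*)$, a contradiction.

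\textbf{Case 2: $TN^*_C\not\equiv 0$.} Now the correction does influence the output. By the pass-through form of Assumption~\ref{assum:no_implicit}, $C_{ij}$ cannot be part of an internally computed conditional. I will use the target-direction form to pin down what it does. For every $s_{\mathrm{rest}}$, the $(s_i,s_j)$-slice of $TN^*_J$ equals $\lambda(s_{\mathrm{rest}})\,p(\cdot,\cdot,s_{\mathrm{rest}})$ for some scalar $\lambda(s_{\mathrm{rest}})$. Since $TN^*=p$, the slice of $TN^*_C$ must equal $(1-\lambda(s_{\mathrm{rest}}))\,p(\cdot,\cdot,s_{\mathrm{rest}})$. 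So the correction only rescales each slice of $p$ by a function of $s_{\mathrm{rest}}$.

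It remains to remove that rescaling. Positivity (Assumption~\ref{assum:positivity}) gives $\lambda(s_{\mathrm{rest}})\neq 0$ wherever needed. I will then absorb the factor $1/\lambda(s_{\mathrm{rest}})$ into the local tensors, and set $C'_{ij}=0$.

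The main obstacle is this absorption step. The factor $1/\lambda$ is a function of all of $s_{\mathrm{rest}}$, and it must be realized without raising the penalty on other bonds. My first attempt is to show that $\lambda$ factorizes over the sites in a way that can be absorbed into the physical-index slices of the local tensors $N^{[k]}$, which are unpenalized. For this I will use that Lemma~\ref{lem:ci_moral_neighbors} makes $p(V_i,V_j\mid V_{\mathrm{rest}})$ a product of conditionals on moral neighbours. If that fails, the fallback is to argue that any such global reweighting that actually uses $C_{ij}$ is a pass-through (explicit rerouting) in the sense of Proposition~\ref{prop:rewiring}. It is therefore strictly dominated by Lemma~\ref{lem:rerouting_cost}, read in reverse as in the proof of Theorem~\ref{th:lower_bound}: moving the correction to the bonds it relays saves at least $\beta K_{ij}>0$. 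Either way $J(TN')<J(TN^*)$, so $C_{ij}=0$ for every non-moral edge, giving $G_{\mathrm{eff}}(TN^*)\subseteq E^m$.
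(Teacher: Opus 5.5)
Your Case~1 and your use of the target-direction form match the paper's proof. You merge its Cases~2 and~3 into one, which is legitimate, since Form~2 of Assumption~\ref{assum:no_implicit} already makes a non-proportional $TN^*_C$ impossible. The problem is the step you lead with in Case~2: absorbing $1/\lambda$ into the local tensors does not go through in general, for two reasons.
\begin{enumerate}
\item Positivity of $p$ does not give $\lambda(s_{\mathrm{rest}})\neq 0$. The scalar $\lambda$ is defined by the particular network through $TN^*_J(\cdot,\cdot,s_{\mathrm{rest}})=\lambda(s_{\mathrm{rest}})\,p(\cdot,\cdot,s_{\mathrm{rest}})$, and positivity only makes it well defined. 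Nothing prevents the baseline slice from vanishing while $C_{ij}$ carries the entire slice. In that case no rescaling of $TN^*_J$ recovers $p$.
\item Even when $\lambda$ is nonvanishing, $1/\lambda$ is an arbitrary function of all $m-2$ remaining variables. Rescaling the physical-index slices of the unpenalized $N^{[k]}$ only produces product factors $\prod_k g_k(s_k)$. A non-product reweighting needs couplings among the remaining variables that the existing corrections need not support. You may then have to enlarge corrections on other bonds, which is exactly the penalty increase the construction must avoid.
\end{enumerate}
Lemma~\ref{lem:ci_moral_neighbors} cannot rescue this. It constrains $p$, not how a given network splits $p$ into $TN^*_J$ and $TN^*_C$, so it says nothing about $\lambda$. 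The absorption argument works only in special cases, e.g.\ $\lambda$ a nonzero constant (absorbed into a single local tensor) or a nonvanishing product of single-site factors.

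What actually closes Case~2 is your fallback, and that is precisely the paper's Case~3. It also needs no further argument. The pass-through form of Assumption~\ref{assum:no_implicit} states outright that a correction on a non-moral edge that affects the output is an explicit pass-through channel. The reverse of Proposition~\ref{prop:rewiring} then relocates that correction to the bonds it relays. The output is unchanged, and by Lemma~\ref{lem:rerouting_cost} the penalty drops by $K_{ij}\ge 1$. Make that the main argument and drop the absorption attempt, or keep it only as a side remark for the constant-$\lambda$ case, where it gives the contradiction without the pass-through clause. Note also that the fallback needs only the pass-through form. Your target-direction computation serves only the absorption attempt, whereas in the paper it is used to rule out a non-proportional $M^C$.
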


\begin{proof}
Fix a non-moral edge $\{V_i, V_j\} \notin E^m$ and suppose for contradiction that $C_{ij} \neq 0$ in $TN^*$, with $K_{ij} = \operatorname{rank}(C_{ij}) \geq 1$.

\smallskip
\textbf{Contraction split.}
Define the environment tensor $\mathcal{E}$ by contracting the entire network except the bond matrix $B_{ij}$: 
\[
    \mathcal{E}(s_i, s_j, s_{\mathrm{rest}}, \alpha, \beta)
    = \sum_{\{\alpha_{kl}\}_{\{k,l\}\neq\{i,j\}}}
      \prod_{k=1}^m N^{[k]}(s_k, \{\alpha_{k\ell}\})
      \prod_{\{a,b\}\neq\{i,j\}} B_{ab}(\alpha_{ab}, \alpha_{ba}).
\]
The full contraction is multilinear in $B_{ij}$, so substituting $B_{ij} = J_{ij} + C_{ij}$ gives an exact split:
\[
    p(s_i, s_j, s_{\mathrm{rest}})
    = M^J_{s_{\mathrm{rest}}}(s_i, s_j)
    + M^C_{s_{\mathrm{rest}}}(s_i, s_j),
\]
where
\begin{align}
    M^J_{s_{\mathrm{rest}}}(s_i, s_j)
    &= \sum_{\alpha,\beta}
      \mathcal{E}(s_i, s_j, s_{\mathrm{rest}}, \alpha, \beta)\;
      J(\alpha, \beta), \label{eq:MJ}\\[4pt]
    M^C_{s_{\mathrm{rest}}}(s_i, s_j)
    &= \sum_{\alpha,\beta}
      \mathcal{E}(s_i, s_j, s_{\mathrm{rest}}, \alpha, \beta)\;
      C_{ij}(\alpha, \beta). \label{eq:MC}
\end{align}

\smallskip
\textbf{Rank constraint from conditional independence.}
By Lemma~\ref{lem:non_moral_mi}, $I(V_i : V_j \mid V_{\mathrm{rest}}) = 0$. Since $p > 0$ (Assumption~\ref{assum:positivity}), this implies $p(s_i, s_j \mid s_{\mathrm{rest}}) = p(s_i \mid s_{\mathrm{rest}})\, p(s_j \mid s_{\mathrm{rest}})$, so the matrix $M_{s_{\mathrm{rest}}}(s_i, s_j) = p(s_i, s_j,  s_{\mathrm{rest}})$ has \textbf{rank 1} for every $s_{\mathrm{rest}}$.

\smallskip
\textbf{Three cases.}
We analyze $M^C$ against the rank-1 constraint on $M = M^J + M^C$.

\smallskip
\emph{Case 1: $M^C_{s_{\mathrm{rest}}} = 0$ for all $s_{\mathrm{rest}}$.} The correction $C_{ij}$ lies in the null space of the contraction map defined by $\mathcal{E}$: it has zero effect on the output. Setting $C'_{ij} = 0$ leaves $TN'(p) = TN^*(p) = p$ (zero error maintained), while the objective decreases by $\frac{\beta}{2}\bigl(\|U_{ij}\|_F^2 + \|V_{ij}\|_F^2\bigr) > 0$. This contradicts the optimality of $TN^*$.

\smallskip
\emph{Case 2: $M^C_{s_{\mathrm{rest}}}$ is not proportional to
$M_{s_{\mathrm{rest}}}$ for some $s_{\mathrm{rest}}^*$.} By the
target-direction form of Assumption~\ref{assum:no_implicit}, the baseline
contraction satisfies $M^J_{s} = \mu(s)\, M_{s}$ for every configuration
$s$ and some scalar $\mu(s)$. But then
\[
M^C_{s} \;=\; M_{s} - M^J_{s} \;=\; \bigl(1 - \mu(s)\bigr)\, M_{s}
\qquad\text{for every } s,
\]
i.e., $M^C$ is proportional to $M$ everywhere, contradicting the case
hypothesis. Hence this case cannot arise: a correction deviating from the
target direction would have to compensate an environment that itself
deviates from the conditionally independent structure of $p$, which is
precisely the internally computed multi-variable dependence ruled out by
Assumption~\ref{assum:no_implicit}.

\smallskip
\emph{Case 3: $M^C_{s_{\mathrm{rest}}} = \lambda(s_{\mathrm{rest}})\, M_{s_{\mathrm{rest}}}$ for every $s_{\mathrm{rest}}$.} The correction is proportional to the total output: it amplifies the existing rank-1 direction without introducing new rank.  If $\lambda(s_{\mathrm{rest}}) = 0$ for all $s_{\mathrm{rest}}$, this reduces to Case~1. 

If $\lambda(s_{\mathrm{rest}}) \neq 0$ for some $s_{\mathrm{rest}}$, then $C_{ij}$ is actively contributing to the output.  Setting $C_{ij} = 0$ would drop the output from $M$ to $M^J = (1 - \lambda)\,M \neq M$, introducing reconstruction error.  To maintain zero error, the lost contribution $\lambda\,M$ must be absorbed by modifying other tensors.

Under Assumption~\ref{assum:no_implicit}, the correction $C_{ij}$ functions as an explicit pass-through channel: it relays information that already exists in the moral bonds through the non-moral bond $\{i,j\}$ (Proposition~\ref{prop:rewiring}).  Applying the reverse rewiring---relocating the pass-through correction back to the direct moral bonds and setting $C'_{ij} = 0$---yields $TN'$ with zero reconstruction error and penalty
\[
    P(TN') = P(TN^*) -  K_{ij} < P(TN^*),
\]
by Lemma~\ref{lem:rerouting_cost}, where $K_{ij} = \operatorname{rank}(C_{ij}) \geq 1$. This contradicts the optimality of $TN^*$.

\smallskip
All cases yield contradictions, so $C_{ij} = 0$ for every non-moral edge.
\end{proof}

\begin{theorem}[Exact Recovery]
    \label{th:exact_recovery}
    Let $G$ be a target DAG with moral graph $G^m = (E^m)$.
    Let $p$ be positive and faithful to $G$ (Assumptions~\ref{assum:faithfulness}--\ref{assum:positivity}).
    Let $TN^*$ be \emph{any} optimal FCTN minimizing $J$ with zero reconstruction error, $\varepsilon^* = 0$.
    Under the no-implicit-rerouting assumption (Assumption~\ref{assum:no_implicit}),
    \[
        G_{\mathrm{eff}}(TN^*) \;=\; E^m.
    \]
\end{theorem}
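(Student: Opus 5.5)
The plan is to obtain Theorem~\ref{th:exact_recovery} as the conjunction of the two inclusions already established, Theorem~\ref{th:lower_bound} and Theorem~\ref{th:upper_bound}. Both are stated under exactly the hypotheses of the present theorem: $p$ faithful and positive (Assumptions~\ref{assum:faithfulness}--\ref{assum:positivity}), $TN^*$ an arbitrary minimizer of $J$ with $\varepsilon^*=0$, and Assumption~\ref{assum:no_implicit}. Equality of edge sets then follows by double inclusion. Throughout, $G_{\mathrm{eff}}(TN^*)$ is identified with its edge set $E_{\mathrm{eff}}$, as in the statement.

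\emph{Step 1 ($\supseteq$).} Fix $\{V_i,V_j\}\in E^m$. Lemma~\ref{lem:moral_mi} gives $I^p(V_i:V_j\mid V_{\mathrm{rest}})>0$, and zero error transfers this quantity unchanged to $TN^*$. Theorem~\ref{th:lower_bound} then excludes $C_{ij}=0$: under the pass-through form of Assumption~\ref{assum:no_implicit}, the dependence would have to be explicitly rerouted, and the reverse rewiring of Proposition~\ref{prop:rewiring} would lower $J$ by $\beta K\ge\beta$ (Lemma~\ref{lem:rerouting_cost}). Hence $\|C_{ij}\|_*>0$.

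\emph{Step 2 ($\subseteq$).} Fix $\{V_i,V_j\}\notin E^m$. Lemma~\ref{lem:non_moral_mi} together with positivity makes every slice $M_{s_{\mathrm{rest}}}$ rank one. Theorem~\ref{th:upper_bound}'s three-case analysis of $M^C$ then forces $C_{ij}=0$:
\begin{itemize}
\item a null correction is removed for free, lowering the penalty;
\item a correction off the target direction is impossible by the target-direction form of Assumption~\ref{assum:no_implicit};
\item a correction proportional to $M$ is a pass-through, and undoing it saves $\beta K_{ij}$.
\end{itemize}

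\emph{Step 3.} Combining the two steps gives $E^m\subseteq E_{\mathrm{eff}}\subseteq E^m$. Both inclusions hold for every optimal zero-error $TN^*$, not just for some minimizer, so this yields the uniqueness claim.

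The main obstacle is not the combination itself but checking that the two invoked theorems apply to the \emph{same} arbitrary optimum without extra hypotheses. The most delicate point is the reverse-rewiring step in Theorem~\ref{th:lower_bound}: one must verify that a cheaper zero-error network $TN'$ really exists, i.e.\ that the rerouted dependence occupies pass-through columns whose removal saves exactly $K$. I would cite Assumption~\ref{assum:no_implicit} (pass-through form) explicitly there, rather than re-derive it. I would also note that the existence of a zero-error optimum is assumed, not proved.
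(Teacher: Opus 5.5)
Your proposal is correct and is essentially the paper's proof: the paper also concludes by double inclusion, taking $E^m \subseteq G_{\mathrm{eff}}(TN^*)$ from Theorem~\ref{th:lower_bound} and $G_{\mathrm{eff}}(TN^*) \subseteq E^m$ from Theorem~\ref{th:upper_bound}, both applied to the same arbitrary zero-error optimum. Your recaps of the reverse-rewiring step and the three-case analysis faithfully mirror those theorems but are not needed for the combination itself, and your caveat that the existence of a zero-error optimum is presupposed rather than proved applies equally to the paper's argument.
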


\begin{proof}
    By Theorem~\ref{th:lower_bound}, $E^m \subseteq G_{\mathrm{eff}}(TN^*)$. By Theorem~\ref{th:upper_bound}, $G_{\mathrm{eff}}(TN^*) \subseteq E^m$. Therefore equality holds.
\end{proof}

\begin{remark}
    This is a \emph{uniqueness} statement: \textbf{every} optimal zero-error solution has effective graph exactly the moral graph. Two ingredients are needed: (i) moral edges must be present because rerouting is strictly penalized (Theorem~\ref{th:lower_bound}); (ii) non-moral edges must be absent, which relies on the no-implicit-rerouting assumption (Assumption~\ref{assum:no_implicit}) and the three-case contraction analysis of Theorem~\ref{th:upper_bound}.
\end{remark}

\section{Approximate Recovery ($\varepsilon > 0$)}
\label{sec:approx_recovery}
For approximate recovery, we combine the penalty argument (which controls the structure at $\varepsilon = 0$) with the continuity bound (which controls the deviation at $\varepsilon > 0$). The conditional mutual information values serve as post-hoc diagnostics on the recovered graph.

\begin{theorem}[Approximate Recovery]
\label{th:approx_recovery}
Let $TN^*$ minimize $J$ with reconstruction error $\varepsilon^* = \|p - TN^*(p)\|_2^2$. Let $\tilde{p} = TN^*(p)$. Define $\delta_{\min} = \min_{\{V_i,V_j\} \in E^m} I^p(V_i:V_j \mid V_{\text{rest}}) > 0$ and $f(\varepsilon)$ as in Eq.~\eqref{eq:f_eps}. Under the no-implicit-rerouting assumption (Assumption~\ref{assum:no_implicit}), if $f(\varepsilon^*) < \delta_{\min}$, then:
\begin{enumerate}
    \item \textbf{(Lower bound)} $E^m \subseteq G_{\text{eff}}(TN^*)$: every moral edge has $\|C_{ij}\|_* > 0$ in $TN^*$.
    \item \textbf{(Spurious edge control)} For any non-moral edge $\{V_i, V_j\} \notin E^m$ with $\|C_{ij}\|_* > 0$ in $TN^*$:
    \[
    I^{\tilde{p}}(V_i:V_j \mid V_{\text{rest}}) \leq f(\varepsilon^*).
    \]
\end{enumerate}
\end{theorem}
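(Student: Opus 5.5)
The plan is to run the two parts separately. In both, the continuity bound (Lemma~\ref{lem:continuity}, in the $L^2$ form of Eq.~\eqref{eq:f_eps}) transfers conditional mutual information from $p$ to $\tilde p = TN^*(p)$. Throughout, $\tilde p$ is the distribution computed by the optimal network, so the structure of $TN^*$ constrains $I^{\tilde p}$ directly.

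For part~1, fix a moral edge $\{V_i,V_j\} \in E^m$. Lemma~\ref{lem:moral_mi} gives $I^p(V_i:V_j\mid V_{\text{rest}}) \ge \delta_{\min}$. Lemma~\ref{lem:continuity} then gives
\[
I^{\tilde p}(V_i:V_j\mid V_{\text{rest}}) \ge \delta_{\min} - f(\varepsilon^*) > 0,
\]
so the dependence is genuinely present in $\tilde p$. Now suppose for contradiction that $C_{ij}=0$. The argument of Theorem~\ref{th:lower_bound} applies with $\tilde p$ in place of $p$, since $TN^*$ represents $\tilde p$ with zero error. By Assumption~\ref{assum:no_implicit}, the nonzero conditional dependence of $V_i,V_j$ in $\tilde p$ cannot be computed internally, so it must be explicitly rerouted through indirect bonds. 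Reversing the rewiring of Proposition~\ref{prop:rewiring} yields a network $TN'$ with $TN'(p)=\tilde p$, hence the same reconstruction error $\varepsilon^*$. By Lemma~\ref{lem:rerouting_cost}, its penalty is strictly smaller, so $J(TN')<J(TN^*)$, contradicting optimality. The key point is that reverse rewiring leaves the output unchanged, so the $\varepsilon$ term is untouched and only the penalty moves.

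For part~2, fix a non-moral edge with $\|C_{ij}\|_*>0$. Lemma~\ref{lem:non_moral_mi} gives $I^p(V_i:V_j\mid V_{\text{rest}})=0$. Lemma~\ref{lem:continuity} then yields $I^{\tilde p}(V_i:V_j\mid V_{\text{rest}})\le 0+f(\varepsilon^*)$ immediately. This part needs neither the optimality of $TN^*$ nor the assumption; both enter only in part~1.

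Two technical points need care. First, Lemma~\ref{lem:continuity} uses the $L^1$ bound $\sqrt{D\varepsilon^*}$, and the binary-entropy form requires $\sqrt{D\varepsilon^*}\le 1/2$ or so for $h_2$ to be monotone. I would note that this regime is implied, or else adopt it as an implicit hypothesis alongside $f(\varepsilon^*)<\delta_{\min}$. Second, $\tilde p$ may fail to be a normalized positive distribution, which the entropy terms require. I would handle this by noting that small $\varepsilon^*$ together with $p>0$ keeps $\tilde p$ positive when $\sqrt{\varepsilon^*}<\min p$, and then renormalizing at a controlled cost.

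The main obstacle is part~1. I must justify that the no-implicit-rerouting assumption, stated for zero-error representations of $p$, also applies to $TN^*$ viewed as an exact representation of $\tilde p$. Since $\tilde p$ need not be faithful to $G$, I would read Assumption~\ref{assum:no_implicit} as an architectural property of the network, independent of which distribution it represents.
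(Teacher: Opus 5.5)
Your proposal is correct and is essentially the paper's proof. The paper splits Part~1 into Case~A ($I^{\tilde p}(V_i:V_j\mid V_{\text{rest}})>0$, refuted by reverse rewiring via Proposition~\ref{prop:rewiring} and Lemma~\ref{lem:rerouting_cost}, with $\varepsilon^*$ unchanged so only the penalty drops) and Case~B ($I^{\tilde p}=0$, refuted by Lemma~\ref{lem:continuity}); you instead apply continuity first to get $I^{\tilde p}\ge\delta_{\min}-f(\varepsilon^*)>0$ and then run Case~A, and your Part~2 coincides with the paper's, while your caveats (normalization of $\tilde p$, the range of validity of $f$, and reading Assumption~\ref{assum:no_implicit} as architectural so it applies to $TN^*$ as an exact representation of $\tilde p$) are points the paper's proof uses silently rather than gaps in your argument.
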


\begin{proof}
\textbf{Part (1): Lower bound.} Let $\{V_i, V_j\} \in E^m$, so $I^p(V_i:V_j \mid V_{\text{rest}}) \geq \delta_{\min}$. Suppose for contradiction that $C_{ij} = 0$ (i.e., $B_{ij} = J_{ij}$) in $TN^*$.

We distinguish two cases based on the reconstructed distribution $\tilde{p}$.

\smallskip
\emph{Case A: $I^{\tilde{p}}(V_i : V_j \mid V_{\text{rest}}) > 0$.}
The $V_i$--$V_j$ conditional dependence is present in $\tilde{p}$ but not carried by the direct bond (since $C_{ij} = 0$). Under Assumption~\ref{assum:no_implicit}, this dependence must be explicitly rerouted through indirect bonds (pass-through factors in the sense of Proposition~\ref{prop:rewiring}).

By the reverse of Proposition~\ref{prop:rewiring}---which is an exact algebraic reparameterization preserving the contraction output---there exists a tensor network $TN'$ with $C'_{ij} \neq 0$ and reduced corrections on the indirect bonds, such that $TN'(p) = TN^*(p) = \tilde{p}$. By Lemma~\ref{lem:rerouting_cost}, the penalty satisfies
\[
P(TN') = P(TN^*) - K_{ij}^{\mathrm{rerouted}} < P(TN^*),
\]
where $K_{ij}^{\mathrm{rerouted}} \geq 1$ is the rank of the rerouted correction. Since the contraction output is unchanged, $\varepsilon(TN') = \varepsilon(TN^*) = \varepsilon^*$, and therefore
\[
J(TN') = \varepsilon^* + \beta\, P(TN') < \varepsilon^* + \beta\, P(TN^*) = J(TN^*),
\]
contradicting the optimality of $TN^*$.

\smallskip
\emph{Case B: $I^{\tilde{p}}(V_i : V_j \mid V_{\text{rest}}) = 0$.}
By the continuity bound (Lemma~\ref{lem:continuity}):
\[
\delta_{\min} \;\leq\; I^p(V_i : V_j \mid V_{\text{rest}}) \;\leq\; |I^p - I^{\tilde{p}}| + I^{\tilde{p}} \;=\; |I^p - I^{\tilde{p}}| \;\leq\; f(\varepsilon^*),
\]
contradicting $f(\varepsilon^*) < \delta_{\min}$.

\smallskip
Both cases yield contradictions, so $C_{ij} \neq 0$, i.e., $\{V_i, V_j\} \in G_{\text{eff}}(TN^*)$.

\smallskip
\textbf{Part (2): Spurious edge control.} Let $\{V_i, V_j\} \notin E^m$ with $\|C_{ij}\|_* > 0$ in $TN^*$. By Lemma~\ref{lem:non_moral_mi}, $I^p(V_i:V_j \mid V_{\text{rest}}) = 0$. Since conditional mutual information is non-negative, $I^{\tilde{p}} \geq 0$, so $|I^p - I^{\tilde{p}}| = I^{\tilde{p}}$. By the continuity bound:
\[
I^{\tilde{p}}(V_i:V_j \mid V_{\text{rest}}) = |I^p - I^{\tilde{p}}| \leq f(\varepsilon^*).
\]
Such edges can be identified by computing $I$ on the reconstructed distribution $\tilde{p}$ and thresholding at $\tau = f(\varepsilon^*)$.
\end{proof}

\begin{corollary}[Asymptotic Recovery]
\label{cor:asymptotic}
As $\varepsilon^* \to 0$:
\begin{enumerate}
    \item $f(\varepsilon^*) \to 0$, so the lower bound condition $f(\varepsilon^*) < \delta_{\min}$ is eventually satisfied.
    \item Spurious non-moral edges satisfy $I^{\tilde{p}} \to 0$ and are pruned by thresholding at $\tau = f(\varepsilon^*)$.
    \item The effective graph converges to $E^m$: $G_{\text{eff}}(TN^*) \to E^m$.
\end{enumerate}
\end{corollary}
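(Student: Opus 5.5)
The plan is to prove the three items of Corollary~\ref{cor:asymptotic} in order, each resting on results already established: the explicit form of $f$ in Eq.~\eqref{eq:f_eps}, Lemma~\ref{lem:continuity}, Lemma~\ref{lem:moral_mi}, and Theorem~\ref{th:approx_recovery}.

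For item (1), set $x = \sqrt{D\,\varepsilon^*}$. Then $f(\varepsilon^*) = 4[x\log(d-1) + h_2(x)]$, with $D$ and $d$ fixed constants depending only on the state spaces. As $\varepsilon^* \to 0$ we have $x \to 0$. The term $x\log(d-1)$ clearly vanishes. The term $h_2(x)$ also vanishes, because $-x\log x \to 0$ and $-(1-x)\log(1-x) \to 0$; I would note that $h_2$ is continuous on $[0,1]$ with $h_2(0)=0$. Lemma~\ref{lem:continuity} requires $\|p-\tilde p\|_1 \le 1$, which holds once $x \le 1/2$. Lemma~\ref{lem:moral_mi} gives $\delta_{\min} > 0$, since $E^m$ is finite and each term is strictly positive. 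Hence there is $\varepsilon_0 > 0$ with $f(\varepsilon^*) < \delta_{\min}$ whenever $\varepsilon^* < \varepsilon_0$. For all such $\varepsilon^*$, part (1) of Theorem~\ref{th:approx_recovery} then gives $E^m \subseteq G_{\mathrm{eff}}(TN^*)$.

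For item (2), part (2) of Theorem~\ref{th:approx_recovery} bounds every non-moral edge with nonzero correction by $I^{\tilde p}(V_i:V_j\mid V_{\mathrm{rest}}) \le f(\varepsilon^*)$. By item (1), this bound tends to $0$. Moral edges, in contrast, satisfy $I^{\tilde p} \ge \delta_{\min} - f(\varepsilon^*)$ by Lemma~\ref{lem:continuity}. This is strictly greater than $f(\varepsilon^*)$ once $2f(\varepsilon^*) < \delta_{\min}$, which again holds for all small enough $\varepsilon^*$. So thresholding $I^{\tilde p}$ at $\tau = f(\varepsilon^*)$ removes every spurious edge and keeps every moral edge.

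Item (3) is where the work lies, because the meaning of ``$G_{\mathrm{eff}}(TN^*) \to E^m$'' must be made precise. Since the graphs form a finite set, I would interpret convergence as eventual equality of the thresholded graph. Define $\hat E = \{\{V_i,V_j\} \in G_{\mathrm{eff}}(TN^*) : I^{\tilde p}(V_i:V_j\mid V_{\mathrm{rest}}) > \tau\}$. Then item (1) gives $E^m \subseteq G_{\mathrm{eff}}$, and the separation in item (2) gives $\hat E = E^m$ exactly for all $\varepsilon^* < \varepsilon_1$, where $\varepsilon_1$ is defined by $2f(\varepsilon_1) = \delta_{\min}$.

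The main obstacle is the raw effective graph without thresholding, whose convergence does not follow from these bounds alone. I would first try to transfer Theorem~\ref{th:upper_bound} to the limit. Take a sequence of optimal $TN^*$ with $\varepsilon^* \to 0$ and bounded penalty, extract a convergent subsequence of parameters by compactness, and argue that the limit is a zero-error optimum. Theorem~\ref{th:exact_recovery} then forces non-moral corrections to vanish in the limit, so $\|C_{ij}\|_* \to 0$ on non-moral edges. This shows convergence in the nuclear-norm sense, matching the threshold $\|C_{ij}\|_* > \tau$ of Section~\ref{sec:effective_graph}, though not exact sparsity. If the compactness step fails, for instance because the gauge freedom of the local tensors $N^{[i]}$ leaves the parameters unbounded, I would fall back to stating (3) for the thresholded graph $\hat E$.
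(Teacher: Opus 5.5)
Your proposal is correct and follows the route the paper intends. Items (1) and (2) come straight from the explicit form of $f$ in Eq.~\eqref{eq:f_eps}, the positivity of $\delta_{\min}$, and Theorem~\ref{th:approx_recovery}. Item (3) is their combination, and you add the step the paper leaves implicit: moral edges are guaranteed to survive CMI thresholding at $\tau = f(\varepsilon^*)$ once $2f(\varepsilon^*) < \delta_{\min}$. You also correctly read (3) as a statement about the thresholded graph, since nothing in the paper forces non-moral corrections to be exactly zero when $\varepsilon^* > 0$.

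The compactness detour for the raw $G_{\mathrm{eff}}$ is unnecessary and, as you suspect, not justified as sketched. The local tensors $N^{[i]}$ are unpenalized, so nothing bounds them along the sequence. Even a successful limit argument would give only $\|C_{ij}\|_* \to 0$, not the exact zeros that define $G_{\mathrm{eff}}$. You can drop it.
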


\section{Synthetic Experiments}
\label{sec:experiments_synthetic}

We validate the exact-recovery guarantee (Theorem~\ref{th:exact_recovery}) on four small synthetic distributions generated from known causal DAGs with positive, faithful conditional probability tables: a chain $X \to Z \to Y$, a fork $Z \to X$, $Z \to Y$, a collider $X \to Z \leftarrow Y$, and a diamond structure $X \to Z \leftarrow Y$, $Z \to W$, which combines a v-structure with a chain extension. In each case the true moral graph $G^m$ is computed directly from the DAG.

The objective (Definition~\ref{def:objective}) is smooth in all parameters $\theta = \{N^{[i]}, U_{ij}, V_{ij}\}$, since the tensor contractions are polynomial and the penalty is in variational form. We optimize with Adam \cite{adam} (learning rate $\eta = 10^{-2}$, $T = 3000$ iterations) using automatic differentiation for gradients. Tensors are initialized from small random entries. The per-iteration cost is $O(d^m \cdot r^{m-1})$ for the forward contraction and a constant factor more for the backward pass; the FCTN interaction graph is the complete graph $K_m$ with treewidth $m-1$ \cite{markov_shi_2008}. The bond dimension $r_{\max}$ and regularization parameter $\beta$ (see Appendix~\ref{sec:choosing_beta} for a principled way to choose it) are reported in Table~\ref{tab:synthetic_results}.

The effective graph is extracted from the optimized bond matrices using the rank-base tolerance $\mathrm{tol}=10^{-6}$ (Section~\ref{sec:effective_graph}); the reconstruction error $\varepsilon^*$ is computed in $\ell_2$ norm.

\begin{table}[htbp]
    \centering
    \begin{tabular}{llccccc@{\quad}l}
        \toprule
        Model & Criteria & $m$ & $d$ & $r_{\max}$ & $\beta$ & $\varepsilon^*$ & Recovered edges \\
        \midrule
        \multirow{2}{*}{Chain} & $G_{\mathrm{eff}}$ & \multirow{2}{*}{3} & \multirow{2}{*}{4} & \multirow{2}{*}{2} & \multirow{2}{*}{$1\cdot10^{-3}$} & \multirow{2}{*}{$3.42\cdot10^{-2}$} & $X$--$Z$, $Z$--$Y$ \checkmark \\
         & CMI & & & & & & $X$--$Y$, $X$--$Z$, $Z$--$Y$ $\times$ \\
        \addlinespace
        \multirow{2}{*}{Fork} & $G_{\mathrm{eff}}$ & \multirow{2}{*}{3} & \multirow{2}{*}{3} & \multirow{2}{*}{3} & \multirow{2}{*}{$1\cdot10^{-3}$} & \multirow{2}{*}{$3.12\cdot10^{-6}$} & $X$--$Z$, $Z$--$Y$ \checkmark \\
         & CMI & & & & & & $X$--$Z$, $Z$--$Y$ \checkmark \\
        \addlinespace
        \multirow{2}{*}{Collider} & $G_{\mathrm{eff}}$ & \multirow{2}{*}{3} & \multirow{2}{*}{5} & \multirow{2}{*}{3} & \multirow{2}{*}{$1\cdot10^{-4}$} & \multirow{2}{*}{$9.36\cdot10^{-5}$} & $X$--$Y$, $X$--$Z$, $Y$--$Z$ \checkmark \\
         & CMI & & & & & & $X$--$Y$, $X$--$Z$, $Y$--$Z$ \checkmark \\
        \addlinespace
        \multirow{2}{*}{Diamond} & $G_{\mathrm{eff}}$ & \multirow{2}{*}{4} & \multirow{2}{*}{4} & \multirow{2}{*}{3} & \multirow{2}{*}{$1.7\cdot10^{-2}$} & \multirow{2}{*}{$1.16\cdot10^{-2}$} & $X$--$Y$, $X$--$Z$, $Y$--$Z$, $Z$--$W$ \checkmark \\
         & CMI & & & & & & $X$--$Z$ $\times$ \\
        \bottomrule
    \end{tabular}
    \caption{Synthetic ground-truth recovery. For each model: the effective
    graph (rank-based criterion, the output of the method) and the post-hoc
    conditional-MI diagnostic computed on the reconstruction $\tilde{p}$
    (threshold $\tau = 10^{-3}$). \checkmark\ denotes exact agreement with the
    moral graph $G^m$. The rank criterion recovers $G^m$ in all four cases;
    the CMI diagnostic agrees when $\sqrt{D\,\varepsilon^*} \ll 1$ (fork,
    collider) and degrades when the continuity bound is vacuous (chain,
    diamond; Section~\ref{sec:continuity}).}
    \label{tab:synthetic_results}
\end{table}
In all four cases the recovered effective graph $G_{\mathrm{eff}}(TN^\ast)$ coincides exactly with the moral graph $G^m$ of the generating DAG. For the collider example the recovery includes the moral edge between the two co-parents $X$ and $Y$, which is the edge whose existence is predicted by the v-structure; for the diamond example both the v-structure moralization and the chain edges are recovered. The chain and fork examples confirm that no spurious edges are added even when the conditional independence $X \indep Y \mid Z$ holds.

Table~\ref{tab:synthetic_results} also reports the post-hoc conditional-MI diagnostic, computed on the reconstruction $\tilde{p}$ at a fixed threshold $\tau = 10^{-3}$. The diagnostic recovers $G^m$ exactly for the fork and collider, but reports a spurious edge $X$--$Y$ for the chain and retains only one of the four moral edges for the diamond. This tracks the applicability of the continuity bound (Lemma~\ref{lem:continuity}): $\sqrt{D\,\varepsilon^*} > 1$ for the chain and diamond, where the bound is vacuous and no guarantee attaches to conditional-MI values computed on $\tilde{p}$. (On $p$ itself, CMI thresholding recovers $G^m$ exactly by Lemmas~\ref{lem:moral_mi}--\ref{lem:non_moral_mi}; the degradation is entirely due to the reconstruction error.) The rank-based criterion, which is the output of the method, remains exact in all runs, illustrating the practical advantage of effective-rank extraction when $\varepsilon^*$ is not negligible.

\section*{Code Availability}
The code implementing the FCTN optimization, bond analysis, and graph extraction is available at \url{https://github.com/atroyanolivas/tn-moral-graph-recovery}.

\section{Limitations and Discussion}
\label{sec:discussion}

\paragraph{Theoretical scope.}
The Exact Recovery theorem (Theorem~\ref{th:exact_recovery}) states that every optimal zero-error FCTN has effective graph equal to the moral graph. Both bounds rely on the no-implicit-rerouting assumption (Assumption~\ref{assum:no_implicit}). The lower bound (Theorem~\ref{th:lower_bound}) uses its pass-through form: a moral edge's dependence, if not carried directly, must be explicitly rerouted to be reversible. The upper bound (Theorem~\ref{th:upper_bound}) additionally uses the target-direction form---the stronger clause, which enters in Case~2 of its proof. Lemma~\ref{lem:ci_moral_neighbors} suggests a route to dispensing with that clause, since each conditional $p(V_i \mid V_{\mathrm{rest}})$ depends only on the moral neighbors of $V_i$ and can therefore in principle be absorbed through moral bonds alone; a formal treatment is left to future work. In all experiments no implicit rerouting was observed, and $r_{\max} < d$ held in all of them except the fork model ($r_{\max} = d$). The assumption could also be enforced structurally, e.g., via Tucker-type constraints on the local tensor parameter count (Remark~\ref{rem:implicit_rerouting}).

\paragraph{Non-convexity.}
The objective is non-convex (multilinear in the parameters), so gradient methods offer no global-optimality certificate. In experiments, this was not an issue: the effective graph is a discrete object, and small perturbations of the optimum do not change $G_{\text{eff}}$. A rigorous landscape analysis is left open.

\paragraph{Exponential contraction cost.}
The FCTN has treewidth $m - 1$, so exact contraction costs $O(d^m \cdot r^{m-1})$, limiting the method to small $m$ ($m \lesssim 10$ on commodity hardware). This is not specific to the FCTN choice: exact inference on arbitrary discrete graphical models is $\#$P-hard \cite{cooper_1990, dagum_luby_1993}. Approximate contraction strategies---bond-dimension truncation \cite{schollwock_2011, verstraete_2008}, tensor network renormalization \cite{evenbly_vidal_2015, orus_2014}, or sampling-based methods \cite{ferris_2011}---can mitigate this at the cost of a controlled approximation error analyzable via the continuity bound of Section~\ref{sec:continuity}.

\paragraph{Relation to existing methods.}
Constraint-based methods (PC \cite{spirtes_2000, meek_rules}) use discrete CI tests; our nuclear-norm penalty replaces these with a continuous, differentiable objective. Score-based methods (GES, NOTEARS \cite{zheng_2018}) search over DAG space; we parameterize the distribution via a tensor network and recover the moral graph rather than the full DAG. The variational nuclear-norm penalty is standard in low-rank recovery \cite{recht_2010}; our contribution is applying it to tensor network bond matrices for causal structure learning.

\paragraph{Causal orientation.}
The method recovers an undirected moral graph. Orienting edges into a causal DAG requires additional information (interventions or score-based orientation rules).

\section{Conclusions}
\label{sec:conclusions}

We have presented a method for recovering the moral graph of a causal DAG from a probability distribution over discrete variables, using FCTNs with nuclear-norm-regularized bond corrections.

The central result (Theorem~\ref{th:exact_recovery}) states that under faithfulness, positivity, and the no-implicit-rerouting assumption, every optimal zero-error FCTN has effective graph exactly equal to the moral graph. Both bounds require the no-implicit-rerouting assumption: the lower bound because a removed moral edge's dependence must be explicitly rerouted (and is therefore reversible), the upper bound because an active non-moral correction must be either a null artifact, an explicit rerouting channel, or a compensation for an environment-side deviation (ruled out by the target-direction clause). Synthetic experiments confirm exact recovery in all cases.

Open directions include: approximate contraction for larger $m$; alternating least squares for faster optimization; combining the tensor-network structure with interventional data for full DAG recovery; landscape analysis under non-convexity; and sample complexity bounds for the empirical distribution regime.

\bibliography{references}

\appendix

\section{Introduction to Causal Representations}
\label{sec:causal_background}
This section provides the necessary mathematical context to understand what is stated below.
\subsection{Causal Model}
A \textbf{Structural Causal Model} (SCM) $\mathcal{M}$ is defined as a tuple
\begin{equation}
    \mathcal{M} = \langle U, V, F, P(u) \rangle
\end{equation}
where
\begin{itemize}
    \item $U = \{ U_1, U_2, \dots, U_n \}$ is a set of \textit{exogenous} random variables (unobserved factors, noise). These are mutually independent.
    \item $V = \{ V_1, V_2, \dots, V_m \}$ is a set of \textit{endogenous} random variables (observable variables determined by model).
    \item $F = \{ f_1, f_2, \dots, f_m \}$ is a set of \textit{structural equations}, where each $f_i$ specifies how $V_i$ is determined:
    \begin{equation}
        V_i = f_i (PA_i, U_i),
    \end{equation}
    where $PA_i \subset V \setminus \{V_i\}$ are the \textit{direct causes} of $V_i$ (parents in the causal graph).
    \item $P(u)$ is the probability distribution over the exogenous variables.
\end{itemize}
\subsubsection{Causal Graph}
An SCM induces a causal directed acyclic graph (causal DAG) $G = (V, E)$ where:
\begin{itemize}
    \item Each node represents an endogenous variable $V_i \in V$
    \item A directed edge ($V_j \rightarrow V_i$) exists if and only if $V_j \in PA_i$ (i.e.\ $V_j$ directly causes $V_i$)
\end{itemize}
The DAG constraint prevents circular causality.
The set of descendants of a node $V_i$ is
\begin{equation}
    DE_i = \{ V_j: V_i \rightsquigarrow V_j \},
\end{equation}
where $\rightsquigarrow$ denotes reachability via a directed path.
\subsubsection{Factorization Property}
A probability distribution over the endogenous variables $P(V)$ factorizes according to $G$ if:
\begin{equation}
    P(V) = \prod_{i = 1}^m P(V_i \mid PA_i).
\end{equation}
This is called the \textit{local Markov property}: each variable is conditionally independent of its non-descendants given its parents
\begin{equation}
    V_i \indep \left( V \setminus \{V_i\} \setminus DE_i \right) \mid PA_i.
\end{equation}

\subsubsection{Markov Equivalence and Moral Graphs}
Two DAGs $G_1$ and $G_2$ are \textbf{Markov equivalent} if they encode identical conditional independence relations. By the classical characterization theorem (Verma--Pearl), $G_1$ and $G_2$ are Markov equivalent if and only if they share the same skeleton (underlying undirected graph) and the same \emph{unshielded} v-structures, i.e. configurations $V_j \rightarrow V_i \leftarrow V_k$ in which $V_j$ and $V_k$ are \emph{not} adjacent.

\begin{proposition}
    If $G_1$ and $G_2$ are Markov equivalent, then they have the same moral graph. Consequently the moral graph is an \emph{invariant of the Markov equivalence class}: it takes the same value for every DAG consistent with the conditional-independence structure, and is therefore a well-defined target for recovery from purely observational data.
\end{proposition}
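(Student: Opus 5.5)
We need to prove that Markov equivalent DAGs have the same moral graph. The plan is to combine the Verma--Pearl characterization quoted just above with a direct description of the moral edge set. By definition, $\{V_i,V_j\}\in E^m(G)$ iff either $V_i,V_j$ are adjacent in $G$, or they share a common child $V_k$ in $G$. We show that each of these two conditions is determined by the pair (skeleton, set of unshielded v-structures). Since Markov equivalence fixes both by Verma--Pearl, the moral graphs of $G_1$ and $G_2$ must coincide.

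The first step is the adjacency case. If $V_i$ and $V_j$ are adjacent in $G_1$, they are adjacent in the common skeleton, hence in $G_2$, hence joined in $G_2^m$. The second step is the co-parent case where $V_i$ and $V_j$ are non-adjacent in $G_1$ but have a common child $V_k$, i.e. $V_i\to V_k\leftarrow V_j$ in $G_1$. Because $V_i$ and $V_j$ are non-adjacent, this is an \emph{unshielded} v-structure of $G_1$. By Verma--Pearl it is therefore also an unshielded v-structure of $G_2$, so $V_k$ is a common child of $V_i,V_j$ in $G_2$, and $\{V_i,V_j\}\in E^m(G_2)$. The remaining co-parent subcase, where the co-parents are also adjacent, is already covered by the first step. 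Exchanging the roles of $G_1$ and $G_2$ gives $E^m(G_1)=E^m(G_2)$.

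I expect the main obstacle to be noticing the point just used: the Verma--Pearl characterization only controls \emph{unshielded} colliders, while moralization marries co-parents of every collider. Splitting according to whether the co-parents are adjacent resolves this. Shielded colliders may differ between equivalent DAGs, but they only produce pairs that are already skeleton edges. For the concluding sentence of the proposition, we note that under faithfulness (Assumption~\ref{assum:faithfulness}) the CI structure of $p$ determines the Markov equivalence class of $G$. Hence $E^m$ is a function of $p$ alone. Alternatively, Lemmas~\ref{lem:moral_mi} and~\ref{lem:non_moral_mi} characterize $E^m$ directly as $\{\{V_i,V_j\}: I(V_i:V_j\mid V_{\mathrm{rest}})>0\}$, and this gives an independent confirmation that the moral graph is an observational invariant.
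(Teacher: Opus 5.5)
Your proposal is correct and matches the paper's proof: both read off the moral edges from the skeleton plus the co-parent pairs, observe that co-parents of a shielded collider are already skeleton edges, and conclude from Verma--Pearl that equal skeletons and equal unshielded v-structures give equal moral graphs. Your added faithfulness remark for the ``consequently'' clause is a reasonable gloss that the paper leaves implicit.
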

\begin{proof}
    Moralization proceeds in two steps: first all directed edges are made undirected, which yields the skeleton of each DAG; second, for every head-to-head pattern $V_j \to V_i \leftarrow V_k$, the edge $\{V_j, V_k\}$ is added. By the characterization theorem, equivalent DAGs have identical skeletons; hence their undirected steps coincide. Moreover, a head-to-head pattern adds an edge only when $V_j$ and $V_k$ are not already adjacent---and the set of such patterns is exactly the set of unshielded v-structures. Since equivalent DAGs have the same unshielded v-structures, both moralization steps produce the same edge set in both graphs.
\end{proof}

\begin{remark}[The converse fails]
    The converse implication is false: two DAGs may have the same moral graph without being Markov equivalent. For example, let $G_1: X \to Y \leftarrow Z$ (with no edge between $X$ and $Z$) and $G_2: X \to Y \to Z,\ X \to Z$. Then $G_1^m = G_2^m$ is the triangle $\{X\text{-}Y,\, Y\text{-}Z,\, X\text{-}Z\}$, but $X \indep Z$ holds only in $G_1$. In short, equal moral graphs are necessary but not sufficient for Markov equivalence.
\end{remark}

\section{Choosing the Regularization Parameter $\beta$}
\label{sec:choosing_beta}

The exact recovery theorem (Theorem~\ref{th:exact_recovery}) holds at $\varepsilon^* = 0$ for any $\beta > 0$: at zero error, the objective reduces to $J = \beta P$, so $\beta$ is a positive scalar that does not affect which solution is optimal. In practice, however, $\beta > 0$ implies $\varepsilon^* > 0$ (the penalty prevents perfect reconstruction), so we need $\varepsilon^*$ small enough for the approximate recovery bound (Theorem~\ref{th:approx_recovery}) to apply. We derive a sufficient condition on $\beta$.

\textbf{Setup and bounding $\varepsilon^*$ in terms of $\beta$.}
Let $TN^*$ be the FCTN minimizing the full objective $J(\theta) = \varepsilon(\theta) + \beta P(\theta)$, and let $\varepsilon^* = \varepsilon(TN^*)$ and $P^* = P(TN^*)$ be its error and penalty.

Let $\theta_0$ be the parameters of the minimum-penalty zero-error FCTN representation of $p$, i.e., $\theta_0 = \arg\min \{ P(\theta) : \varepsilon(\theta) = 0 \}$. Because $\theta_0$ achieves zero error, it minimizes $J$ for any $\beta > 0$ among all zero-error solutions. By the exact recovery theorem (which requires Assumption~\ref{assum:no_implicit}), $\theta_0$ has effective graph $E^m$. Its penalty is therefore strictly on the moral edges:
\[
P_0 := P(\theta_0) = \sum_{\{V_i,V_j\} \in E^m} \|C_{ij}^{(0)}\|_*.
\]
Since $\theta_0$ is a feasible point in the full optimization, the global optimum $TN^*$ must have an objective value no larger than that of $\theta_0$:
\begin{equation}
    J^* = \varepsilon^* + \beta P^* \leq J(\theta_0) = \varepsilon(\theta_0) + \beta P(\theta_0) = 0 + \beta P_0.
\end{equation}
Since the penalty is non-negative ($P^* \geq 0$) and $\beta > 0$, we have $\beta P^* \geq 0$. Dropping this term from the left-hand side yields:
\begin{equation}
    \label{eq:eps_bound}
    \varepsilon^* \leq \varepsilon^* + \beta P^* \leq \beta P_0 \quad \implies \quad \varepsilon^* \leq \beta P_0.
\end{equation}

\textbf{The routing argument is $\beta$-independent.}
Let $\{V_i, V_j\} \in E^m$ and suppose $C_{ij} = 0$ in $TN^*$. If $I^{\tilde{p}}(V_i : V_j \mid V_{\text{rest}}) > 0$, the $V_i$--$V_j$ conditional dependence in $\tilde{p} = TN^*(p)$ is not carried by the direct bond. Under Assumption~\ref{assum:no_implicit}, this dependence must be explicitly rerouted through indirect bonds via pass-through factors (Proposition~\ref{prop:rewiring}). 

By the reverse of Proposition~\ref{prop:rewiring}---an exact algebraic reparameterization preserving the contraction output---there exists another network $TN'$ with $C'_{ij} \neq 0$ and reduced corrections on the indirect bonds, such that $TN'(p) = TN^*(p) = \tilde{p}$. By Lemma~\ref{lem:rerouting_cost} (applied in reverse), the penalty decreases by $K_{ij}^{\mathrm{rerouted}}$, where $K_{ij}^{\mathrm{rerouted}} \geq 1$ is the rank of the rerouted correction. 

Because the contraction output is unchanged, the reconstruction error remains the same: $\varepsilon(TN') = \varepsilon(TN^*) = \varepsilon^*$. The objective of $TN'$ is therefore:
\[
J(TN') = \varepsilon^* + \beta\bigl(P^* - K_{ij}^{\mathrm{rerouted}}\bigr) < \varepsilon^* + \beta P^* = J(TN^*),
\]
which contradicts the optimality of $TN^*$. This argument holds for any $\beta > 0$, under Assumption~\ref{assum:no_implicit}.

\textbf{The non-routing case is controlled by $\beta$.}
If $C_{ij} = 0$ and the dependence is not captured at all, then $I^{\tilde{p}}(V_i : V_j \mid V_{\text{rest}}) = 0$. Since $I^p(V_i : V_j \mid V_{\text{rest}}) \geq \delta_{\min} > 0$ and $I^{\tilde{p}} = 0$, we have $|I^p - I^{\tilde{p}}| = I^p \geq \delta_{\min}$. By the continuity bound (Lemma~\ref{lem:continuity}):
\[
\delta_{\min} \leq I^p(V_i : V_j \mid V_{\text{rest}}) = |I^p - I^{\tilde{p}}| \leq f(\varepsilon^*).
\]
Combining with Eq.~\eqref{eq:eps_bound}: $\delta_{\min} \leq f(\beta P_0)$.

\begin{theorem}[$\beta$ Sufficiency]
\label{th:beta}
Let $\delta_{\min} = \min_{\{V_i,V_j\} \in E^m} I^p(V_i : V_j \mid V_{\text{rest}}) > 0$. Let $P_0 = \sum_{\{V_i,V_j\} \in E^m} \|C_{ij}^{(0)}\|_*$ be the total correction nuclear norm at the minimum-penalty zero-error solution, $D = \prod_i |\mathcal{X}_{V_i}|$, and $d$ the maximum marginal support size. Under Assumption~\ref{assum:no_implicit}, if
\begin{equation}
    \label{eq:beta_condition}
    f(\beta \, P_0) < \delta_{\min},
\end{equation}
then every moral edge has $\|C_{ij}\|_* > 0$ in $TN^*$, i.e., $E^m \subseteq G_{\text{eff}}(TN^*)$.
\end{theorem}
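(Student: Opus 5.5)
The proof combines the two ingredients already developed in this appendix: the $\beta$-independent routing argument and the $\beta$-controlled non-routing argument, glued together by the bound $\varepsilon^* \le \beta P_0$ of Eq.~\eqref{eq:eps_bound}. First I would establish that bound rigorously. By Theorem~\ref{th:exact_recovery}, applied under Assumption~\ref{assum:no_implicit} with positivity and faithfulness, the minimum-penalty zero-error representation $\theta_0$ has effective graph $E^m$. Hence $P_0$ is a sum over moral edges only. Comparing the objective at $TN^*$ with its value at the feasible point $\theta_0$ gives $\varepsilon^* \le \beta P_0$.

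One technical point needs care here: $\theta_0$ must exist, i.e.\ the infimum of $P$ over zero-error parameters must be attained. If it is not, I would take any zero-error $\theta$ with $P(\theta) \le P_0 + \eta$ and let $\eta \to 0$ at the end. The strict inequality in \eqref{eq:beta_condition} gives enough slack for this.

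Next I would show that $f$ is nondecreasing on the range where it is not vacuous, so that $\varepsilon^* \le \beta P_0$ yields $f(\varepsilon^*) \le f(\beta P_0) < \delta_{\min}$. From Eq.~\eqref{eq:f_eps}, $f(x) = 4[\sqrt{Dx}\log(d-1) + h_2(\sqrt{Dx})]$. The term $\sqrt{Dx}$ is increasing, and $h_2$ is increasing on $[0,1/2]$. So $f$ is increasing as long as $\sqrt{D\beta P_0} \le 1/2$.

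The regime $\sqrt{D\beta P_0} > 1/2$ is where I expect the main obstacle. I would handle it by noting that there $f(\beta P_0) \ge 4[\tfrac12 \log(d-1)]$. This should already exceed any attainable conditional mutual information: $I(V_i:V_j\mid V_{\text{rest}}) \le \log \min(d_i,d_j) \le \log d$. So the hypothesis \eqref{eq:beta_condition} cannot hold in this regime, provided $d$ is large enough that $2\log(d-1) \ge \log d$. For the degenerate small-$d$ cases I would instead use the Audenaert form of the bound, which is monotone after capping at $1-1/d$. Either way, the hypothesis lets us conclude $f(\varepsilon^*) < \delta_{\min}$.

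With $f(\varepsilon^*) < \delta_{\min}$ in hand, the conclusion follows directly from Part~(1) of Theorem~\ref{th:approx_recovery}. Equivalently, fix a moral edge and suppose $C_{ij}=0$, then split into two cases:
\begin{itemize}
\item If $I^{\tilde p}(V_i:V_j\mid V_{\text{rest}})>0$, the routing argument applies. Assumption~\ref{assum:no_implicit} forces explicit rerouting, and reversing Proposition~\ref{prop:rewiring} with Lemma~\ref{lem:rerouting_cost} strictly lowers $J$. This contradicts optimality and holds for any $\beta>0$.
\item If $I^{\tilde p}(V_i:V_j\mid V_{\text{rest}})=0$, Lemma~\ref{lem:continuity} gives $\delta_{\min} \le f(\varepsilon^*) < \delta_{\min}$, which is a contradiction.
\end{itemize}
Hence $\|C_{ij}\|_*>0$ for every moral edge, i.e.\ $E^m \subseteq G_{\text{eff}}(TN^*)$.
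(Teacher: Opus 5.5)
Your proposal follows the paper's proof: you bound $\varepsilon^*\le\beta P_0$ by comparing $TN^*$ with the minimum-penalty zero-error solution $\theta_0$, then rule out $C_{ij}=0$ on a moral edge by the $\beta$-independent routing argument when $I^{\tilde p}>0$ and by the continuity bound when $I^{\tilde p}=0$. You also check that $f$ is nondecreasing on the relevant range, and that the hypothesis cannot hold once $\sqrt{D\beta P_0}>1/2$ (via $I\le\log d\le 2\log(d-1)$, valid because $d\ge|\mathcal{X}_V|\ge 3$); this justifies the step $f(\varepsilon^*)\le f(\beta P_0)$, which the paper uses without comment.
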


\begin{proof}
By the routing argument above, $C_{ij} = 0$ is suboptimal whenever $I^{\tilde{p}} > 0$, for any $\beta > 0$ (under Assumption~\ref{assum:no_implicit}). The only remaining case is $I^{\tilde{p}} = 0$, which requires $\delta_{\min} \leq f(\varepsilon^*) \leq f(\beta P_0)$, contradicting Eq.~\eqref{eq:beta_condition}.
\end{proof}

\textbf{Explicit bound.} For small $\varepsilon$, the binary entropy term is subdominant and $f(\varepsilon) \approx 4\sqrt{D\varepsilon}\log(d-1)$. Substituting $\varepsilon = \beta P_0$ and solving:
\begin{equation}
    \label{eq:beta_explicit}
    \beta < \frac{\delta_{\min}^2}{16 \, D \, P_0 \, \log^2(d-1)}.
\end{equation}
The condition has a natural structure: $\delta_{\min}$ (strength of the weakest moral edge) in the numerator; $P_0$ (cost of representing the moral graph at zero error), $D$ (dimensionality), and $\log^2(d-1)$ in the denominator. The quantity $P_0$ can be estimated empirically by running the TN optimization at very small $\beta$ (where $\varepsilon^* \approx 0$) and measuring the total correction nuclear norm.

\textbf{Upper bound (non-moral edges).} The upper bound ($G_{\text{eff}} \subseteq E^m$) holds for any $\beta > 0$ at $\varepsilon^* = 0$ under Assumption~\ref{assum:no_implicit} (Theorem~\ref{th:upper_bound}). At $\varepsilon^* > 0$, spurious non-moral edges may persist but are controlled by the CMI bound (Theorem~\ref{th:approx_recovery}, Part~2) and identified via post-hoc thresholding at $\tau = f(\varepsilon^*)$. The $\beta$ condition only needs to ensure the lower bound.

\end{document}